\documentclass[11pt]{article}
\usepackage[preprint]{acl}
\usepackage{times}
\usepackage{latexsym}
\usepackage[T1]{fontenc}
\usepackage[utf8]{inputenc}
\usepackage{microtype}
\usepackage{inconsolata}
\usepackage{graphicx}
\usepackage{amsmath}
\usepackage{amssymb}
\usepackage{bbold}
\usepackage[table]{xcolor}
\usepackage{array}
\usepackage{arydshln}
\usepackage{caption}
\usepackage{subcaption}
\usepackage{booktabs}
\usepackage{tabularx}
\usepackage{algorithm}
\usepackage{algpseudocode}
\usepackage{amsthm}
\usepackage{marvosym}
\newtheorem{proposition}{Proposition}

\definecolor{LightPurple}{HTML}{D6D6FF}
\newcommand{\hhh}{C\textsc{ram}}

\title{\hhh: Centroid-Routing and Adaptive MoE for\\ Multimodal Continual Instruction Tuning}
\author{Jun-Tao Tang\textsuperscript{*},
        Zhen-Hao Xie\textsuperscript{*},
        Yu-Cheng Shi\textsuperscript{},
        Da-Wei Zhou\textsuperscript{\Letter} \\
         School of Artificial Intelligence, Nanjing University \\
        State Key Laboratory of Novel Software Technology, Nanjing University \\
        \textsuperscript{*} Equal contribution   \quad \textsuperscript{\Letter} Corresponding author\\
         {\tt \small
        \{juntao.tang, 231250034\}@smail.nju.edu.cn, 
        \{wenzh,zhoudw\}@lamda.nju.edu.cn
        }
}

\begin{document}
\maketitle
\begin{abstract}
Multimodal Large Language Models (MLLMs) unify heterogeneous vision-language tasks under a shared generative framework via instruction tuning, yet real-world deployment demands continuous capability expansion, making Multimodal Continual Instruction Tuning (MCIT) essential. Existing methods either update all tasks with a shared parameter set or allocate dedicated modules for each new task. Shared updates force heterogeneous tasks to compete, causing forgetting of learned capabilities. Conversely, isolated expansion prevents interference but severely limits parameter efficiency over long task streams. To address this dilemma, we propose \hhh{} (\underline{C}entroid-\underline{R}outing and \underline{A}daptive \underline{M}oE). Specifically, by isolating task-specific patterns into independent modules, \hhh{} mitigates catastrophic forgetting across tasks. To further boost parameter efficiency, we utilize adaptive-rank instantiation to identify the capability gap between existing expert capability and new task demands, and dynamically allocate only the necessary parameters. To ensure stable reuse among tasks, centroid-guided routing recognizes and activates existing experts' capabilities, while an orthogonality penalty confines new updates to task-specific directions, preventing re-learning general capability. Extensive experiments across diverse benchmarks demonstrate its superiority over existing methods. Code is available at \url{https://github.com/LAMDA-CL/EMNLP2026-CRAM}.

\end{abstract}
\begin{figure}[t]
  \centering
  \footnotesize
  \captionsetup[sub]{font=small}  
  \begin{subfigure}[b]{0.495\columnwidth}
    \centering
    \includegraphics[width=\linewidth]{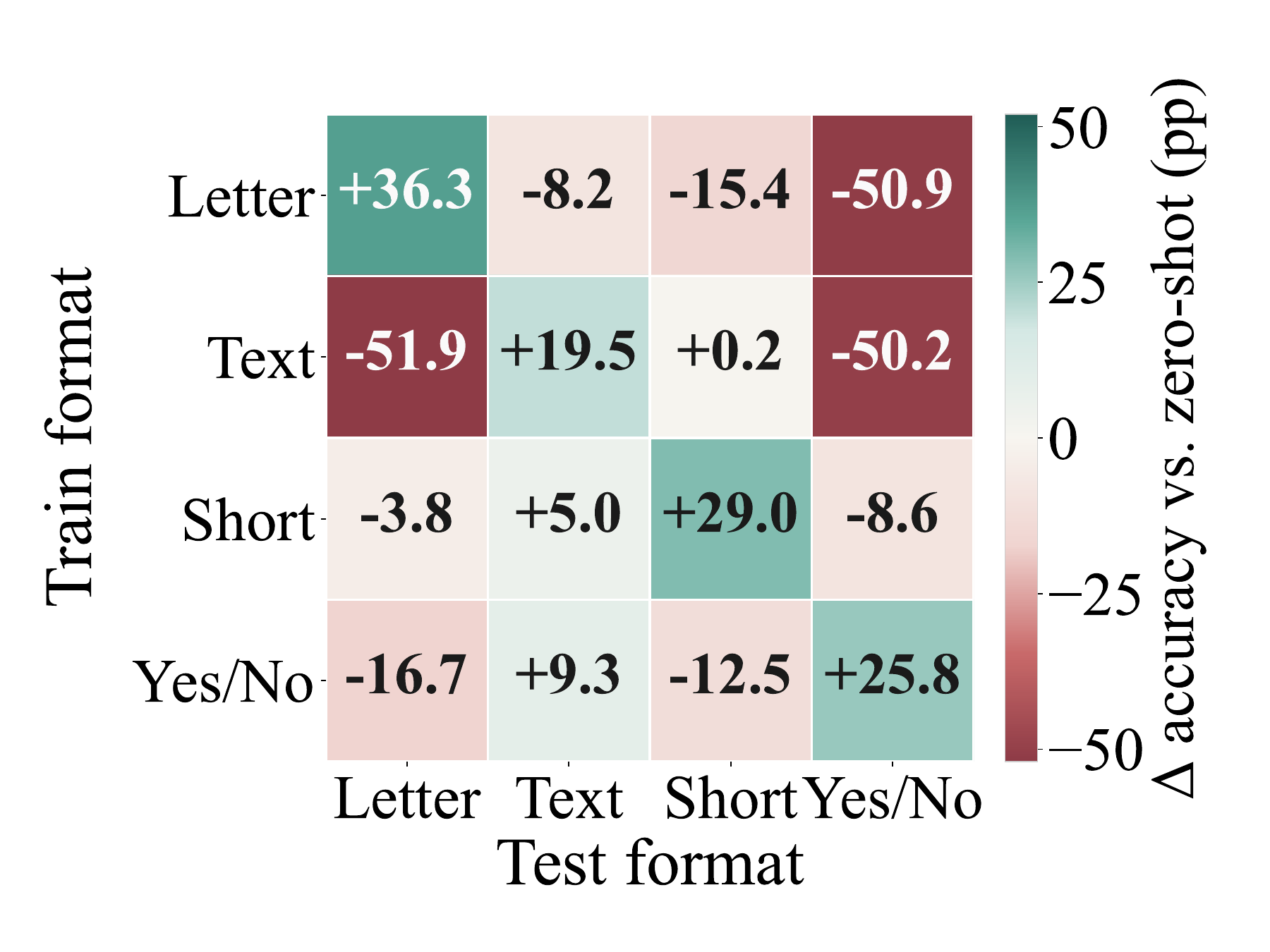}
    \caption{Change vs. Zero-shot}
    \label{fig:pre1_1}
  \end{subfigure}
  \hfill
  \begin{subfigure}[b]{0.495\columnwidth}
    \centering
    \includegraphics[width=\linewidth]{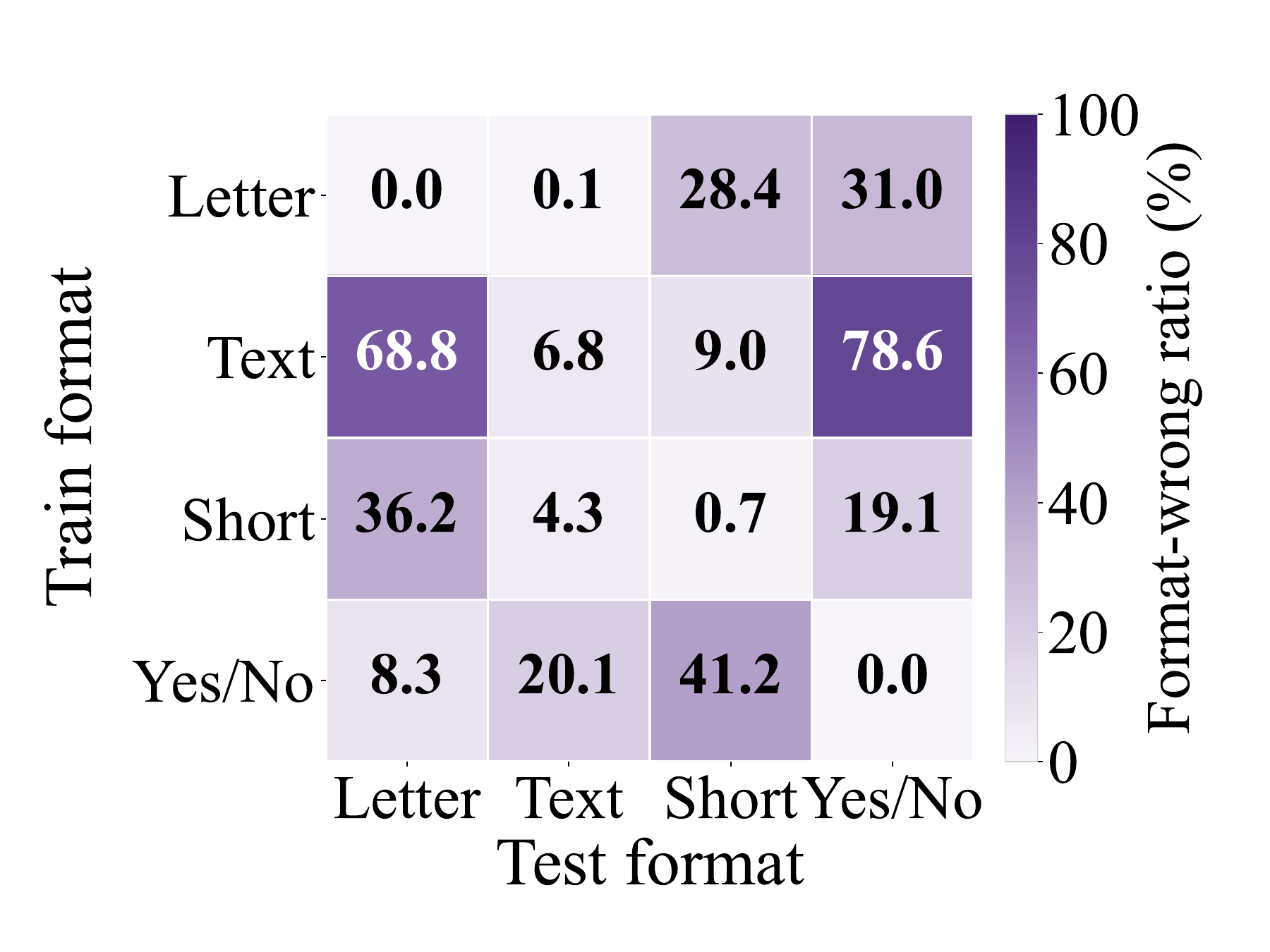}
    \caption{Format-wrong ratio}
    \label{fig:pre1_2}
  \end{subfigure}
    \caption{Format-level interference. (a) Accuracy change relative to zero-shot for each train-test format pair. (b) Ratio of format wrong but semantically correct responses among errors.}
  \label{fig:pre1}
  \vspace{-4mm}
\end{figure}

\begin{figure}[t]
  \centering
  \includegraphics[width=0.85\columnwidth]{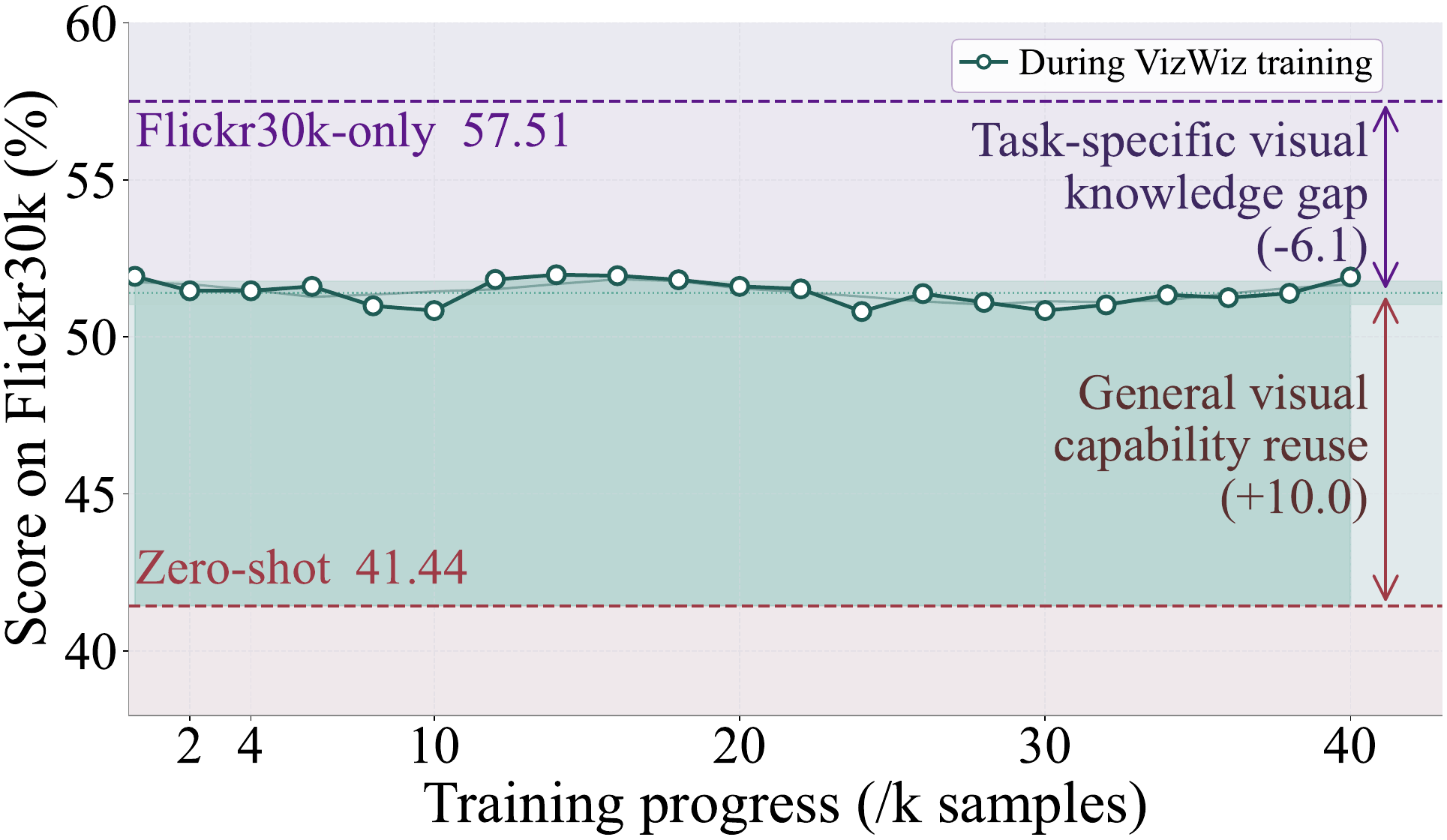}
  \caption{Capability transfer during training.}
  \label{fig:pre2}
  \vspace{-5mm}
\end{figure}

\section{Introduction}
Recently, multimodal Large Language Models (MLLMs)~\citep{liu2023llava,Qwen-VL,zhu2023minigpt} have achieved remarkable capabilities through multimodal instruction tuning~\citep{tong2025metamorph,instrcut2}. By reformulating heterogeneous vision-language problems into a unified generative framework, instruction tuning enables MLLMs to generalize across diverse downstream applications~\citep{dai2023instructblip,instruct1}, from visual reasoning to document understanding. While standard MLLM optimization relies on a static multitask regime that assumes joint data availability~\citep{instruct3,joint_train}, real-world deployment rarely satisfies this premise: tasks arrive sequentially, inherently introducing distribution shifts across both instructional formats and visual domains. As a result, a practically useful MLLM must continually acquire new capabilities, yet sequential adaptation inherently risks catastrophic forgetting~\citep{zhou2024continual,kirkpatrick2017overcoming,qi2025adaptive}, where gradient updates for new tasks overwrite established representations and degrade historical performance~\citep{rebuffi2017icarl}. To enable continuous capability acquisition while mitigating this degradation, multimodal continual instruction tuning (MCIT)~\citep{chen2024coin,huo2025continue,hu2026dynamic} has emerged as a critical paradigm.

Existing MCIT methods mainly consider shared adaptation or parameter isolation. 
Shared approaches~\citep{chen2025sefe,xie2026same,chen2024coin,huai2025cl,zhao2025llava} update all tasks within a single parameter set, causing new updates to overwrite learned patterns. Conversely, isolation methods~\citep{guo2025federated,zeng2025modalprompt,guo2025hide} assign separate modules to each task, leading to unbounded parameter growth and preventing general ability from being shared across tasks. Nevertheless, they overlook a key balance: task-specific patterns must be isolated to avoid interference, while general knowledge should be preserved for reuse. 

To identify what requires isolation and what can be shared, we design controlled experiments that fix one modality while varying the other.
First, to isolate the impact of instruction formats, we use the ArxivQA~\citep{li2024arxiv} dataset to generate four response variants by altering the instruction-response convention. For example, we convert a multiple-choice question into a short-answer format by extracting the correct option's content and adjusting the prompt accordingly (see Appendix~\ref{app:pre1} for details). We then fine-tune the model on one variant and evaluate it across the others. As shown in Fig.~\ref{fig:pre1_1}, training on any single format severely degrades performance on the rest: the model generates semantically accurate answers but fails to adhere to the required output conventions (Fig.~\ref{fig:pre1_2}). This confirms that heterogeneous instruction formats inherently trigger severe interference, acting as a primary factor of catastrophic forgetting in shared-parameter regimes.

We further examine how visual representations behave across distinct domains. Specifically, we select two datasets with the same answer format but non-overlapping visual content: VizWiz-Caption~\citep{gurari2018vizwiz} and Flickr30k~\citep{plummer2015flickr30k}. Throughout the training process on VizWiz-Caption, we monitor the model's performance on Flickr30k. As shown in Fig.~\ref{fig:pre2}, the accuracy stabilizes above the zero-shot baseline but remains below direct fine-tuning on Flickr30k. These results confirm that visual capabilities are partly transferable across tasks, while task-specific ability learning remains necessary. Critically, this reveals a fundamental limitation of isolation methods: by segregating tasks into independent modules, such segregation inherently blocks the reuse of transferable capabilities, forcing redundant learning and limiting parameter efficiency.

Collectively, these findings demonstrate that instruction conventions are highly sensitive patterns requiring isolation, whereas visual capabilities are partially transferable and thus can be reused. 
Guided by this insight, we propose \hhh{} (\underline{C}entroid-\underline{R}outing and \underline{A}daptive \underline{M}oE). 
It groups instructions by similarity into separate clusters, isolating conflicting updates to prevent format interference. 
Within each cluster, adaptive-rank instantiation filters out directions already mastered by historical experts, allocating parameters strictly to unmet capability gaps, naturally bounding model growth. 
Centroid-guided routing further stabilizes expert reuse by directing inputs to experts with aligned capabilities, while an orthogonality penalty confines new updates to missing patterns, preventing them from overwriting historical knowledge.
Together, \hhh{} achieves state-of-the-art performance by balancing format isolation with visual reuse, without compromising parameter efficiency.

\section{Related Work}
\label{sec:relatedwork}
Continual instruction tuning~\citep{chen2024coin} is essential for deploying MLLMs in open-world settings~\citep{liu2025continual}. Existing methods predominantly adopt one of two structural paradigms: \textit{shared adaptation} consolidates trainable parameters into a unified subspace (e.g., LoRA/MoE-LoRA)~\citep{hu2022lora,chen2024coin,zhao2025llava} to maximize plasticity~\citep{plasticity}, often supplemented by replay~\citep{rebuffi2017icarl,li2025multimodal,marouf2025ask} or regularization~\citep{xie2026same,chen2025sefe,huai2025cl}, yet inevitably entangles heterogeneous instruction patterns and triggers gradient interference~\citep{wang2023orthogonal,cao2024continual}; \textit{parameter isolation} mitigates interference by allocating task-dedicated modules~\citep{guo2025hide,zeng2025modalprompt}, preserving stability but fracturing reusable visual competencies and incurring linear parameter growth~\citep{guo2025federated}. Despite their divergent implementations, both paradigms frame sharing and isolation as mutually exclusive. This overlooks the heterogeneous structure of multimodal knowledge~\citep{chen2025sefe}, where some instructional patterns require isolation while transferable visual representations should be systematically reused~\citep{wang2025smolora}.

\begin{figure*}[t]
    \centering
    \includegraphics[width=\linewidth]{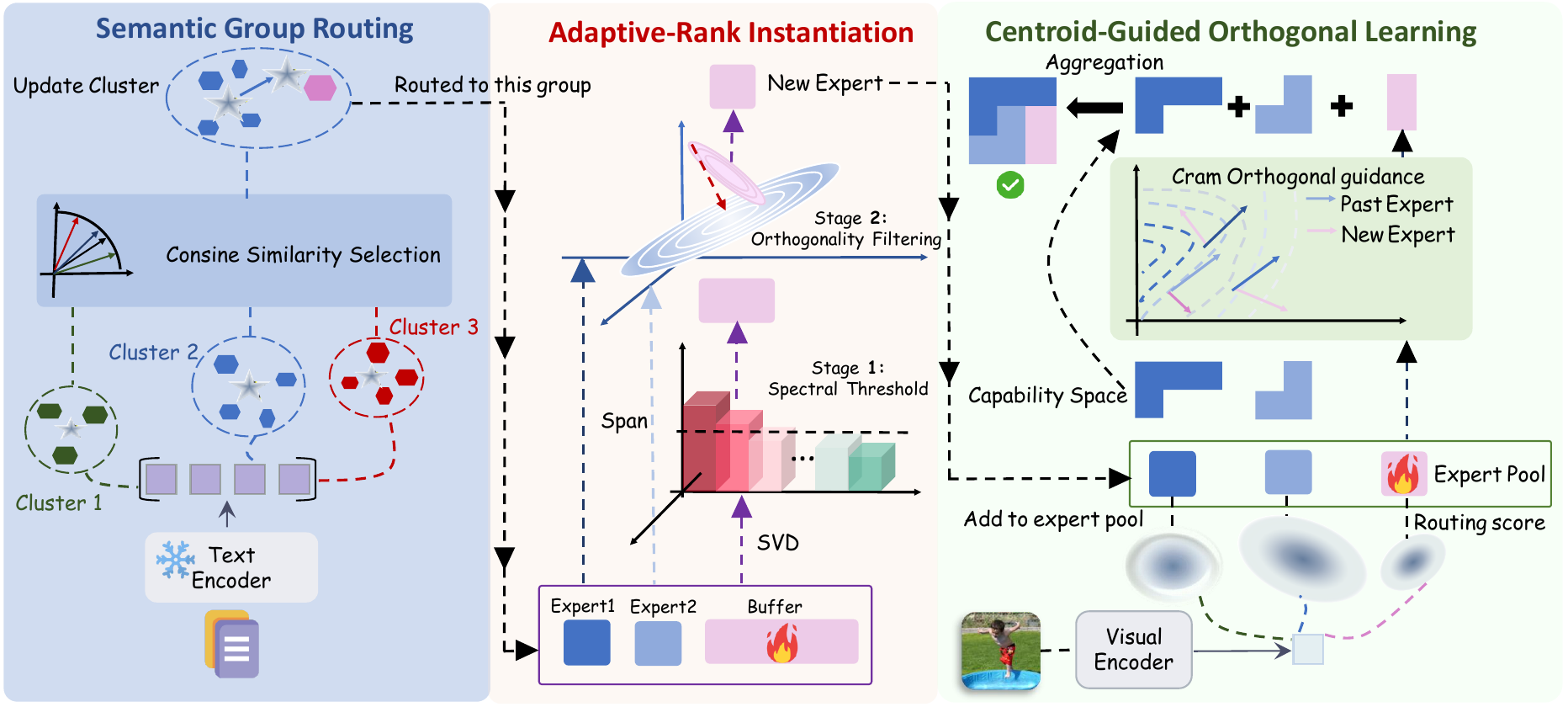}
    \vspace{-8mm}
    \caption{Overview of \hhh. \hhh~decouples via (i) \textit{Semantic Group Routing} that groups instructions by semantic similarity to protect output conventions, (ii) \textit{Adaptive-Rank Instantiation} that allocates parameters exclusively to capability gaps via orthogonality filtering, and (iii) \textit{Centroid-Guided Orthogonal Learning} to stabilize expert activation while confining new updates to orthogonal subspaces.}
    \vspace{-5mm}
\end{figure*}

\section{Preliminaries}
\label{sec:preliminaries}
\noindent\textbf{Multimodal Continual Instruction Tuning (MCIT).} 
We consider an MLLM~\citep{liu2023llava} comprising a vision encoder $\phi(\cdot)$, a multimodal projector $\pi(\cdot)$, and an LLM backbone $f(\cdot)$. To enable continual adaptation~\citep{chen2024coin}, we inject lightweight modules into the LLM, with trainable parameters denoted as $\theta$. Let $\{\mathcal{D}_t\}_{t=1}^T$ denote a sequential task stream, where $\mathcal{D}_t = \{(v_i, q_i, y_i)\}_{i=1}^{n_t}$ contains images $v_i$, instructions $q_i$, and response sequences $y_i = (y_{i,1}, \dots, y_{i,L_i})$. Each input is encoded as $z_i = [\pi(\phi(v_i)); \text{Emb}(q_i)]$, where $\text{Emb}(\cdot)$ is the token embedding layer of $f$. The model generates responses autoregressively, factorizing the conditional probability as $p_{\theta}(y_i \mid z_i) = \prod_{j=1}^{L_i} p_{\theta}(y_{i,j} \mid z_i, y_{i,<j})$.
The negative log-likelihood loss is defined as:
\begin{equation}
\mathcal{L}_{\text{NLL}} = -\sum_{j=1}^{L_i} \log p_{\theta}(y_{i,j} \mid z_i, y_{i,<j}).
\label{eq:nll_loss}
\end{equation}
The learning objective at task $t$ is formulated as:
\begin{equation}
\nonumber
\theta_t^* = \arg\min_{\theta} \mathbb{E}_{(v_i,q_i,y_i)\sim \mathcal{D}_{\le t}} \big[ \mathcal{L}_{\text{NLL}}(\theta; z_i, y_i) \big],
\label{eq:mcit_objective}
\end{equation}
where $\mathcal{D}_{\le t} = \bigcup_{k=1}^{t} \mathcal{D}_k$ denotes all data observed up to task $t$. In rehearsal-free MCIT setting~\citep{zhou2024continual,chen2024coin}, we can only access $\mathcal{D}_t$ when learning task $t$. This mismatch between the global cumulative objective and the locally available data inherently drives catastrophic forgetting~\citep{kirkpatrick2017overcoming}.

\noindent\textbf{MoE with LoRA Experts.}
In conventional MoE-LoRA designs, the backbone remains frozen while a pool of trainable LoRA experts is injected into specific modules~\citep{hu2022lora}. Let $\mathbf{W_0}$ denote the frozen weight and $\Delta \mathbf{W_i} = \mathbf{B_i A_i}$ be the low-rank update for the $i$-th expert. Given an input $h \in \mathbb{R}^d$ at a layer, the layer output is computed as:
\begin{equation}
\nonumber
h' = \mathbf{W_0} h + \sum_{i=1}^{N} \omega_i \Delta \mathbf{W_i} h = \mathbf{W_0} h + \sum_{i=1}^{N} \omega_i B_i A_i h,
\label{eq:moe_lora}
\end{equation}
where $\omega_i = \mathrm{Softmax}_\tau(\mathbf{W_G} h)_i$ denotes the routing weight for expert $i$. Here, $\mathbf{W_G} \in \mathbb{R}^{N \times d}$ is the linear gating matrix and $\tau$ is the temperature.

\noindent\textbf{Discussion.}
While effective in static environments, the linear gating degrades under continual instruction tuning with multiple tasks, since a single router is forced to simultaneously satisfy conflicting format isolation and visual reuse objectives.

\section{Method}
\label{sec:method}
To address the isolation-sharing dilemma, we follow a straightforward intuition: by isolating conflicting instruction formats first, we can safely reuse general ability and focus parameter allocation solely on the specific gaps that truly need them.
\subsection{Group-Isolated Semantic Routing}
\label{subsec:group_routing}
In a continual instruction stream, tasks arrive sequentially with heterogeneous formats. Under the shared adaptation paradigm, unified optimization entangles gradient flows, triggering instruction-level interference. Resolving this requires grouping instructions by semantic intent, isolating conflicting updates to prevent cross-format interference.

To implement this, we employ a frozen CLIP~\citep{radford2021learning} text encoder $\psi(\cdot)$ to extract instruction embeddings, yielding normalized vectors $\mathbf{e}(q) = \psi(q)/\|\psi(q)\|_2$. We maintain a cluster center $\mathbf{c}_g$ for each semantic group by averaging the instruction embeddings $\mathbf{e}(q)$ routed to it, while tracking the corresponding sample count $n_g$. Semantic affinity to that group is subsequently measured via cosine similarity $s_g(q) = \frac{\mathbf{c}_g^\top \mathbf{e}(q)}{\|\mathbf{c}_g\|_2}$.

Absolute similarity scores fluctuate unpredictably under distribution shifts. We instead adopt a relative confidence margin. Let $s_{(1)}(q)$ and $s_{(2)}(q)$ denote the highest and second-highest cosine similarities. The assignment margin is:
\begin{equation}
\Delta(q) = s_{(1)}(q) - s_{(2)}(q).
\label{eq:assignment_margin}
\end{equation}
If $\Delta(q) > \theta$, $q$ is confidently assigned to the best-matching cluster $g^* = g_{(1)}$; otherwise, we instantiate a new center $\mathbf{c}_{\text{new}} \leftarrow \mathbf{e}(q)$. Then, the matched cluster's center is updated via:
\begin{equation}
\mathbf{c}_{g^*} \leftarrow \frac{n_{g^*} \mathbf{c}_{g^*} + \mathbf{e}(q)}{n_{g^*} + 1}, \quad n_{g^*} \leftarrow n_{g^*} + 1.
\label{eq:center_update}
\end{equation}
This applies a diminishing step size that naturally attenuates transient noise as clusters expand, ensuring routing boundaries evolve smoothly. 

\noindent\textbf{Discussion.} By grouping instructions with semantic similarity, we protect output conventions, enabling the ability to be safely reused within groups.

\subsection{Adaptive-Rank Expert Instantiation}
\label{subsec:orthogonal_instantiation}
As new tasks arrive continuously, allocating fixed-size experts for each task induces unbounded parameter expansion, whereas allocating no additional experts causes existing parameters to be overwritten, triggering catastrophic forgetting. Fig.~\ref{fig:pre2} reveals that most visual capabilities required by new tasks are largely covered by existing experts, leaving only a small capability gap. Motivated by this, we argue that parameter allocation should exclusively target this gap, reframing the core challenge as dynamically determining the minimal capacity required to encode incremental task knowledge.

To probe the capability gaps, we introduce a warm-up phase that accumulates initial gradient updates into a temporary LoRA buffer $\mathbf{{W}_{\mathcal{B}}}$. As the raw update $\Delta\mathbf{{W}_{\mathcal{B}}}$ mixes task-specific signals with shared gradient components, we apply Singular Value Decomposition to isolate the principal directions that capture the unmet capability gaps:
\begin{equation}
\Delta{\mathbf{{W}_{\mathcal{B}}}} = \sum_{p=1}^{P} \sigma_p \mathbf{u}_p \mathbf{v}_p^\top, \quad \sigma_1 \geq \cdots \geq \sigma_P.
\label{eq:svd_decomp}
\end{equation}
This spectral representation explicitly ranks directional components by geometric significance. To identify the novel correction signals, we implement a two-stage geometric filtering pipeline:

(1) \textit{Spectral Thresholding}: Directions with $\sigma_p \leq \tau_{\mathrm{alloc}}$ are discarded to filter optimization noise, yielding the candidate set $\mathcal{C}$.

(2) \textit{Historical Subspace Orthogonality Filtering}:
In group $g^*$, to prevent allocating parameters to already mastered subspaces, we quantify each candidate's novelty against the expert pool $\mathcal{E}_{g^*}$. 
Since the rows of each down-projection matrix $\mathbf{A}_i$ span the input directions actively modified by expert $i$, we horizontally concatenate their transposes to characterize the union of previously optimized subspaces, yielding the unified historical representation:
\begin{equation}
\mathbf{M}_{\mathrm{hist}} = \big[ \mathbf{A}_1^\top, \dots, \mathbf{A}_{|\mathcal{E}_{g^*}|}^\top \big] \in \mathbb{R}^{d \times R_{\mathrm{hist}}},
\end{equation}
where $R_{\mathrm{hist}}=\sum_{i=1}^{|\mathcal{E}_{g^*}|} r_i$, $r_i$ denotes the number of rows of $\mathbf{A}_i$.
We then extract an orthonormal basis $\mathbf{Q}$ for $\mathrm{col}(\mathbf{M}_{\mathrm{hist}})$ via QR decomposition~\citep{ye2005two}. Geometrically, $\mathbf{Q}\mathbf{Q}^\top$ acts as an orthogonal projector onto the historical subspace. By applying the complementary operator $(\mathbf{I} - \mathbf{Q}\mathbf{Q}^\top)$ to each candidate direction $\mathbf{v}_p \in \mathcal{C}$, we isolate the component orthogonal to established directions. The magnitude of this residual quantifies geometric independence, reflecting the novel optimization capacity $\mathbf{v}_p$ introduces. To eliminate scale dependence, we normalize this by $\|\mathbf{v}_p\|_2$, yielding a bounded orthogonality score:
\begin{equation}
\alpha_p = \frac{\|(\mathbf{I} - \mathbf{Q}\mathbf{Q}^\top)\mathbf{v}_p\|_2}{\|\mathbf{v}_p\|_2 }.
\label{eq:orthogonality_score}
\end{equation}
Only directions satisfying $\alpha_p > \tau_{\mathrm{orth}}$ are retained in $\mathcal{S}$ to reconstruct the final expert parameters:
\begin{equation}
\mathbf{A}_t = \boldsymbol{\Sigma}_{\mathcal{S}}^{1/2}\tilde{\mathbf{V}}_{\mathcal{S}}^\top, \qquad
\mathbf{B}_t = \mathbf{U}_{\mathcal{S}}\boldsymbol{\Sigma}_{\mathcal{S}}^{1/2},
\label{eq:expert_reconstruction}
\end{equation}
where $\mathbf{U}_{\mathcal{S}}$ and $\tilde{\mathbf{V}}_{\mathcal{S}}$ denote the retained singular vectors, and $\boldsymbol{\Sigma}_{\mathcal{S}}$ their corresponding singular values. 
\noindent\textbf{Discussion.} 
By filtering out directions already covered by historical experts, we only allocate parameters to capability gaps, bounding model growth while preserving established representations.

\subsection{Centroid-Guided Orthogonal Learning}
\label{subsec:centroid_routing}
With experts instantiated, another problem arises: how to achieve stable intra-group reuse? Unlike conventional MoE with a static expert pool, MCIT must continuously expand the pool as tasks arrive. This introduces two bottlenecks: First, continuous updates to routing matrices induce boundary drift, creating a training-inference mismatch in expert activation. Second, optimization of subsequent experts within a shared parameter space compounds overlapping feature directions. As the pool expands, repeated updates inflate the magnitude of these directions, skewing MoE aggregation and suppressing novel corrections.

To address these challenges, we assign a dedicated learnable centroid $\mathbf{w}_k \in \mathbb{R}^{d_v}$ to each expert, collectively forming $\mathcal{W} = \{\mathbf{w}_k\}$. These centroids act as geometric anchors that directly govern expert activation. Upon task completion, each centroid is permanently frozen to lock its decision boundary against future shifts. For an incoming visual input $v_i$, we extract its aggregated representation via the MLLM's frozen vision encoder $\phi(\cdot)$ as $\boldsymbol{\xi}_i = \mathrm{MeanPool}(\phi(v_i)) \in \mathbb{R}^{d_v}$. The routing function $\rho$ then computes a proximity score $\rho_{i,k} = \rho(\boldsymbol{\xi}_i, \mathbf{w}_k)$ for every centroid. Within group $g^*$, these scores are normalized across the expert pool $\mathcal{K}_{g^*}$ to yield the routing weights:
\begin{equation}
\omega_{i,k} = \frac{\rho_{i,k}}{\sum_{j \in \mathcal{K}_{g^*}} \rho_{i,j}}, \quad \forall k \in \mathcal{K}_{g^*}.
\label{eq:routing_weights}
\end{equation}
Balancing historical reuse with novel learning requires the routing function to maintain high activation for queries near known centroids while smoothly decaying for distant ones. We achieve this via a Gaussian radial basis kernel:
\begin{equation}
\rho(\boldsymbol{\xi}_i, \mathbf{w}_k) = \exp\!\left(-\frac{\|\boldsymbol{\xi}_i - \mathbf{w}_k\|_2^2}{2\sigma^2}\right),
\label{eq:rbf_score}
\end{equation}
where $\sigma$ controls the bandwidth.

Besides the routing function, proper initialization is also critical. Naive random or zero initialization places centroids off the data manifold, triggering routing collapse. Instead, for a centroid $\mathbf{w}_k$, we anchor it within the local data distribution: during the warm-up phase (Sec.~\ref{subsec:orthogonal_instantiation}), $\mathbf{w}_k$ is initialized as the mean of its corresponding accumulated queries $\boldsymbol{\xi}$. After the warm-up, the centroid is updated via backpropagation for continuous adaptation.

Despite stabilized routing, a structural imbalance persists in expert aggregation. As new experts are instantiated, their updates inherently contain directions shared with historical experts. While new experts learn to cooperate with the existing expert pool during training, historical experts have never adapted to co-activate with future ones. During inference, when an input simultaneously activates both historical and new experts, their outputs along shared directions sum without coordination, causing the effective magnitude of these redundant components to accumulate disproportionately. In contrast, task-specific directions are encoded by a single expert and remain comparatively small. This magnitude imbalance skews the aggregated MoE output toward redundant features, suppressing novel correction signals. To restore balance, new updates must exclusively target unrepresented residuals. We thus constrain $\mathbf{\Delta{W}_{\text{new}}}$ to the orthogonal complement of historical subspaces. Using the precomputed $\mathbf{Q}$ (Sec.~\ref{subsec:orthogonal_instantiation}), we enforce this via an orthogonality penalty for a batch of size $B$:
\begin{equation}
\mathcal{L}_{\mathrm{dec}} = \frac{1}{B} \sum_{i=1}^B \gamma_i \cdot \big\| \Delta\mathbf{{W}_{\text{new}}} (\mathbf{Q}\mathbf{Q}^\top \mathbf{h}_i) \big\|_2^2,
\label{eq:decoupling_loss}
\end{equation}
where $\gamma_i = 1-\omega_{i,\mathrm{new}}$ measures reliance on historical experts,
$\omega_{i,\mathrm{new}}$ is the weight of the newly added expert, and
$\mathbf{h}_i$ denotes the input hidden state.

\noindent\textbf{Discussion.} By routing via visual affinity to stable centroids, we stabilize expert reuse, while an orthogonality penalty guides new updates to unlearned directions, ensuring clean aggregation and learning only what is truly needed.

\subsection{Summary of \hhh}
\label{subsec:summary}
For each task, \hhh{} executes a pipeline: a warm-up phase first accumulates visual queries to initialize centroids $\mathcal{W}$ and extract the historical basis $\mathbf{Q}$, guiding data-driven expert instantiation (Eq.~\ref{eq:expert_reconstruction}). During optimization, centroid-guided routing (Eqs.~\ref{eq:routing_weights}--~\ref{eq:rbf_score}) stabilizes capability reuse, while a penalty (Eq.~\ref{eq:decoupling_loss}) constrains new updates to orthogonal subspaces. Building upon the standard generation loss (Eq.~\ref{eq:nll_loss}), the joint objective is:
\begin{equation}
\mathcal{L} = \mathcal{L}_{\text{NLL}} + \mathcal{L}_{\mathrm{dec}}.
\label{eq:total_loss}
\end{equation}
Upon convergence, historical experts and centroids are frozen before starting the next task. During inference, routing weights are computed directly via Eq.~\ref{eq:routing_weights}. The algorithm is detailed in Appendix~\ref{sec:pseudocode}.

\begin{table*}[t]
\centering
\small
\setlength{\extrarowheight}{1pt}
\setlength{\tabcolsep}{17pt}
\renewcommand{\arraystretch}{1.0}
\caption{Performance on UCIT. The best and second-best results are highlighted in \textbf{bold} and \underline{underline}, respectively.}
\label{tab:ucit}
\resizebox{\linewidth}{!}{
\begin{tabular}{l|cccccc|c}
\hline
\rowcolor{gray!20}
\textbf{Methods} & ImageNet-R & ArxivQA & VizWiz-Caption & IconQA & CLEVR-Math & Flickr30k & Average \\
\hline

\rowcolor{LightPurple!10} 
Zero-shot~\citep{liu2023llava} & 18.88 & 52.62 & 38.75 & 21.25 & 21.12 & 41.44 & 32.34 \\
\rowcolor{LightPurple!20} 
FT-LoRA~\citep{hu2022lora} & 29.33 & 55.30 & 45.51 & 26.13 & 13.07 & \underline{58.07} & 37.90 \\
\rowcolor{LightPurple!30} 
MoE-LoRA~\citep{chen2024coin} & 58.43 & 77.57 & 44.83 & 68.90 & 56.73 & \textbf{58.27} & 60.79 \\
\rowcolor{LightPurple!40} 
Replay-LoRA & 76.93 & 87.07 & 54.31 & 56.43 & 36.40 & 55.94 & 61.18 \\\hdashline
\rowcolor{LightPurple!50} 
CL-MoE~\citep{huai2025cl} & 64.12 & 78.38 & 44.83 & 62.00 & 50.75 & 58.06 & 59.69 \\
\rowcolor{LightPurple!60} 
HiDe-LLaVA~\citep{guo2025hide} & 87.62 & 91.12 & 42.68 & 57.62 & 31.00 & 50.41 & 60.08 \\
\rowcolor{LightPurple!70} 
ModalPrompt~\citep{zeng2025modalprompt} & 80.50 & 90.62 & \underline{60.13} & 63.50 & 55.75 & 57.09 & 67.93 \\
\rowcolor{LightPurple!80} 
DISCO~\citep{guo2025federated} & \underline{88.88} & \textbf{94.25} & 47.52 & 69.50 & 60.75 & 56.32 & 69.54 \\
\rowcolor{LightPurple!90} 
SAME~\citep{xie2026same} & \textbf{89.91} & 91.40 & 55.33 & \underline{77.51} & \underline{68.85} & 55.43 & \underline{73.07} \\
\hdashline
\rowcolor{LightPurple!150} 
\textbf{\hhh{}\ (Ours)} & 87.23 & \underline{92.17} & \textbf{60.55} & \textbf{80.43} & \textbf{70.67} & 57.35 & \textbf{74.73} \\
\hline
\end{tabular}
}
\vspace{-2mm}
\end{table*}

\begin{table*}[t]
\centering
\renewcommand{\arraystretch}{1.0}
\setlength{\extrarowheight}{1pt}
\caption{Performance on TriGap. The best and second-best results are highlighted in \textbf{bold} and \underline{underline}, respectively.}
\label{tab:trigap}
\resizebox{\linewidth}{!}{
\begin{tabular}{l|cccccccccc|c}
\hline
\rowcolor{gray!20}
\textbf{Methods} & PMCVQA & DocVQA & ChartQA & IconQA & InfographicVQA & ArxivQA & Roadside & ChemVQA & FloodNetVQA & CLEVR-Math & Average \\
\hline

\rowcolor{LightPurple!10} 
Zero-shot~\citep{liu2023llava} & 35.40 & 12.68 & 9.36 & 19.27 & 5.06 & 53.77 & 7.40 & 5.30 & 47.41 & 20.37 & 21.60 \\
\rowcolor{LightPurple!20} 
FT-LoRA~\citep{hu2022lora} & 34.20 & 23.32 & 9.84 & 37.07 & 23.53 & 83.83 & 7.00 & 12.70 & 80.31 & 60.27 & 37.21 \\
\rowcolor{LightPurple!30} 
Replay-LoRA & 33.70 & 33.95 & 14.00 & 46.67 & 28.97 & 75.57 & 9.40 & 15.90 & 73.81 & 58.80 & 39.08 \\
\rowcolor{LightPurple!40} 
MoE-LoRA~\citep{chen2024coin} & 39.03 & 37.49 & 12.44 & 43.43 & 35.17 & 90.90 & 7.93 & 20.70 & \textbf{90.41} & \textbf{67.00} & 44.45 \\\hdashline
\rowcolor{LightPurple!50} 
HiDe-LLaVA~\citep{guo2025hide} & 37.00 & 33.20 & 10.52 & 41.97 & 24.09 & 79.20 & 7.73 & 11.17 & 57.39 & 23.00 & 32.53 \\
\rowcolor{LightPurple!60} 
ModalPrompt~\citep{zeng2025modalprompt} & 38.23 & 38.23 & 11.92 & 44.73 & 37.37 & 84.47 & 10.13 & 12.43 & 71.52 & 52.50 & 40.15 \\
\rowcolor{LightPurple!70} 
CL-MoE~\citep{huai2025cl} & 40.53 & 36.79 & 13.72 & 52.70 & 32.27 & \textbf{93.00} & 7.77 & 18.33 & 80.09 & \underline{65.90} & 44.11 \\
\rowcolor{LightPurple!80} 
SAME~\citep{xie2026same} & 41.60 & \underline{43.87} & \underline{17.56} & \underline{64.03} & \textbf{39.57} & 90.46 & \underline{10.83} & 21.77 & \underline{81.09} & 54.50 & 46.53 \\
\rowcolor{LightPurple!90} 
DISCO~\citep{guo2025federated} & \underline{42.03} & 43.50 & \textbf{18.01} & 63.13 & 38.23 & 91.27 & \underline{11.02} & \underline{22.13} & 80.25 & 55.87 & \underline{46.54} \\\hdashline
\rowcolor{LightPurple!150} 
\textbf{\hhh{}\ (Ours)} & \textbf{43.60} & \textbf{44.33} & 16.92 & \textbf{66.00} & \underline{39.17} & \underline{91.83} & \textbf{12.83} & \textbf{24.27} & 79.49 & 59.53 & \textbf{47.79} \\
\hline
\end{tabular}}
\vspace{-4mm}
\end{table*}

\section{Experiments}
\label{sec:experiment}
\subsection{Implementation Details}
\label{subsec:exp_settings}
\noindent\textbf{Datasets.}
We evaluate \hhh{} on two widely used benchmarks. \textbf{UCIT}~\citep{guo2025hide} comprises six tasks: ArxivQA~\citep{li2024arxiv}, CLEVR-Math~\citep{lindstrom2022clevr}, IconQA~\citep{lu2021iconqa}, ImageNet-R~\citep{imagenet-R}, VizWiz-Caption~\citep{gurari2018vizwiz}, and Flickr30k~\citep{plummer2015flickr30k}. \textbf{TriGap}~\citep{xie2026same} encompasses ten tasks: PMCVQA~\citep{zhang2023pmcvqa}, DocVQA~\citep{mathew2021docvqa}, ChartQA~\citep{masry2022chartqa}, IconQA~\citep{lu2021iconqa}, InfographicVQA~\citep{mathew2022infographicvqa}, ArxivQA~\citep{li2024arxiv}, Roadside~\citep{guan2026roadscenevqa}, ChemVQA~\citep{chemvqa}, FloodNetVQA~\citep{rahnemoonfar2021floodnet}, and CLEVR-Math~\citep{lindstrom2022clevr}.

\noindent\textbf{Compared Methods.}
We compare \hhh{} with state-of-the-art MCIT approaches, encompassing two representative structural paradigms: 
(1) \textit{Shared adaptation}: including FT-LoRA, Replay-LoRA, MoE-LoRA~\citep{chen2024coin}, CL-MoE~\citep{huai2025cl}, and SAME~\cite{xie2026same}; 
and (2) \textit{Parameter isolation}: including HiDe-LLaVA~\citep{guo2025hide}, DISCO~\citep{guo2025federated}, and ModalPrompt~\citep{zeng2025modalprompt}. 
We additionally report Zero-shot~\cite{liu2023llava} performance as a lower-bound reference. 
To ensure a fair comparison, all methods adhere to identical backbone architectures and data protocols.

\noindent\textbf{Training Details.}
All experiments are conducted on 4 NVIDIA RTX 5090 GPUs. We follow Prism~\citep{tang2026prism} to conduct all experiments.
We use LLaVA-v1.5-7B~\citep{liu2023llava} as the backbone MLLM and employ the text tower of CLIP-L/14-336~\citep{radford2021learning} to extract instruction embeddings for semantic routing. We inject LoRA adapters into the attention projections of the LLM. Each task is trained for $1$ epoch using the AdamW optimizer with a learning rate of $2 \times 10^{-4}$ and a linear warm-up ratio of $0.03$. A batch size of $8$ is used across all benchmarks. For \hhh{}, hyperparameters are configured as follows: semantic routing margin $\theta = 0.1$, RBF kernel bandwidth $\sigma = 2$, spectral allocation threshold $\tau_{\mathrm{alloc}} = 0.08$, and orthogonality filtering threshold $\tau_{\mathrm{orth}} = 0.99$. For the warm-up stage in Sec.~\ref{subsec:orthogonal_instantiation}, we allocate $5\%$ of training steps and initialize the rank of the temporary buffer $\mathbf{W}_{\mathcal{B}}$ to 48. All other implementation details are provided in Appendix~\ref{app:implementation_details}.

\noindent\textbf{Evaluation Metrics.}
Following \citet{zhou2024class,clmetrics}, we report average final performance $\Bar{\mathcal{A}}=\frac{1}{T}\sum_{s=1}^{T}\mathcal{A}_{s,T}$, where $\mathcal{A}_{s,T}$ denotes accuracy on task $s$ after learning all tasks.

\subsection{Main Results} 
We evaluate \hhh{} on the UCIT and TriGap (Tabs.~\ref{tab:ucit}, \ref{tab:trigap}). 
On the 10-task TriGap benchmark, \hhh{} achieves an average accuracy of {47.79\%}, surpassing the strongest baseline DISCO by 1.25\%. The consistent improvements across heterogeneous domains (e.g., 2.14\% on ChemVQA) reveal that as the task stream lengthens, {adaptive-rank instantiation} effectively curbs parameter redundancy by allocating capacity only to unmet capability gaps, while orthogonality filtering ensures new experts complement rather than overwrite established knowledge. 
On UCIT, \hhh{} attains {74.73\%}, substantially outperforming shared-adaptation baselines like MoE-LoRA by 13.94\%. This directly validates the necessity of isolating instruction formats: by clustering semantically similar prompts, our routing mechanism neutralizes the gradient interference that typically degrades performance in unified parameter spaces.

\subsection{Ablation Study}
We conduct an ablation study on the UCIT benchmark. For variants without adaptive-rank instantiation, the rank is fixed to the average rank of the full \hhh{} to ensure parameter-aligned evaluation.

\begin{table}[t]
\centering
\footnotesize
\setlength{\tabcolsep}{5pt}
\renewcommand{\arraystretch}{1.0}
\caption{Component-wise Ablation.}
\label{tab:ablation_main}
\begin{tabular}{@{}>{\raggedright\arraybackslash}p{5.2cm}|c@{}}
\hline
\textbf{Variant} & \textbf{Avg. Acc (\%)} \\
\hline
Baseline & 53.31 \\
w/ Group & 66.38 \\
w/ Group + Adaptive & 68.92 \\
w/ Group + Adaptive + Centroid (\hhh{}) & \textbf{74.73} \\
\hline
\end{tabular}
\vspace{-3mm}
\end{table}

\begin{table}[t]
\centering
\footnotesize
\setlength{\tabcolsep}{5pt}
\renewcommand{\arraystretch}{1.0}
\caption{Ablation on design choices.}
\label{tab:ablation_design}
\begin{tabular}{@{}>{\raggedright\arraybackslash}p{5.2cm}|c@{}}
\hline
\textbf{Variant} & \textbf{Avg. Acc (\%)} \\
\hline
\textbf{\hhh{}\ (Full)} & \textbf{74.73} \\
Gaussian RBF  $\rightarrow$ cosine similarity & 55.29 \\
warm-up centroid init $\rightarrow$ zero init & 57.29 \\
\hline
\end{tabular}
\vspace{-4mm}
\end{table}

\noindent\textbf{Component-wise Ablation.} 
Tab.~\ref{tab:ablation_main} details step-wise gains. \textbf{Baseline} denotes MoE-LoRA with a fixed expert rank of $r=8$, deliberately aligned with \hhh{}'s average rank on UCIT ($7.94$) for parameter-equivalent comparison (In Tab.~\ref{tab:ucit} MoE-LoRA uses $r=16$). \textit{Group-isolated routing} (+13.07\%) neutralizes format interference via semantic clustering. \textit{Adaptive-rank instantiation} (+2.54\%) targets capability gaps through orthogonality filtering, improving parameter efficiency. \textit{Centroid-guided learning} (+5.81\%) stabilizes expert activation and resolves aggregation imbalance via RBF routing and orthogonal constraints.

\noindent\textbf{Design Choices.}  Tab.~\ref{tab:ablation_design} validates key components via substitution with simpler alternatives. Replacing Gaussian RBF scoring with cosine similarity reduces accuracy by 19.44\%, confirming that RBF's distance-decay property is essential for stable expert activation. Replacing warm-up centroid initialization with zero initialization reduces performance by 17.44\% due to off-manifold instability. 

\subsection{Efficiency Analysis}
We evaluate efficiency from two perspectives: trainable parameters and computational cost. 

\noindent\textbf{Parameter Efficiency.}
Figure~\ref{fig:params} compares parameter counts on TriGap. By allocating parameters only to uncovered capability gaps, \hhh{} achieves the highest average performance with fewer trainable parameters than competing methods.

\begin{figure}[t]
  \centering
  \includegraphics[width=0.8\columnwidth]{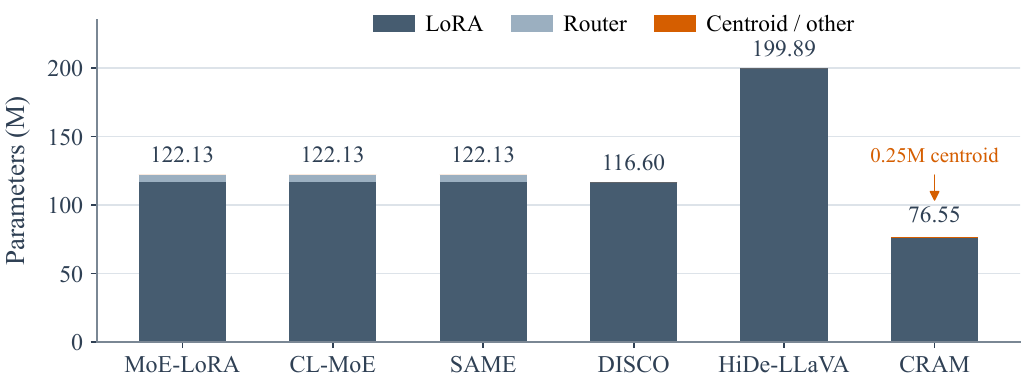}
  \caption{Parameter comparison on TriGap.}
  \label{fig:params}
  \vspace{-4mm}
\end{figure}

\noindent\textbf{Computational Efficiency.}
Figure~\ref{fig:train_time} compares the training time of \hhh{} and MoE-LoRA (Baseline) on UCIT. Although \hhh{} introduces additional computation for SVD decomposition and orthogonality filtering, its training-time overhead remains modest, as adaptive-rank instantiation is performed only when a new expert is created. Table~\ref{tab:efficiency} further reports peak training memory and inference throughput. \hhh{} incurs only a modest memory overhead and achieves the highest inference throughput, as routing and expert activation are restricted to the selected semantic group.

\begin{figure}[t]
  \centering
  \includegraphics[width=0.8\columnwidth]{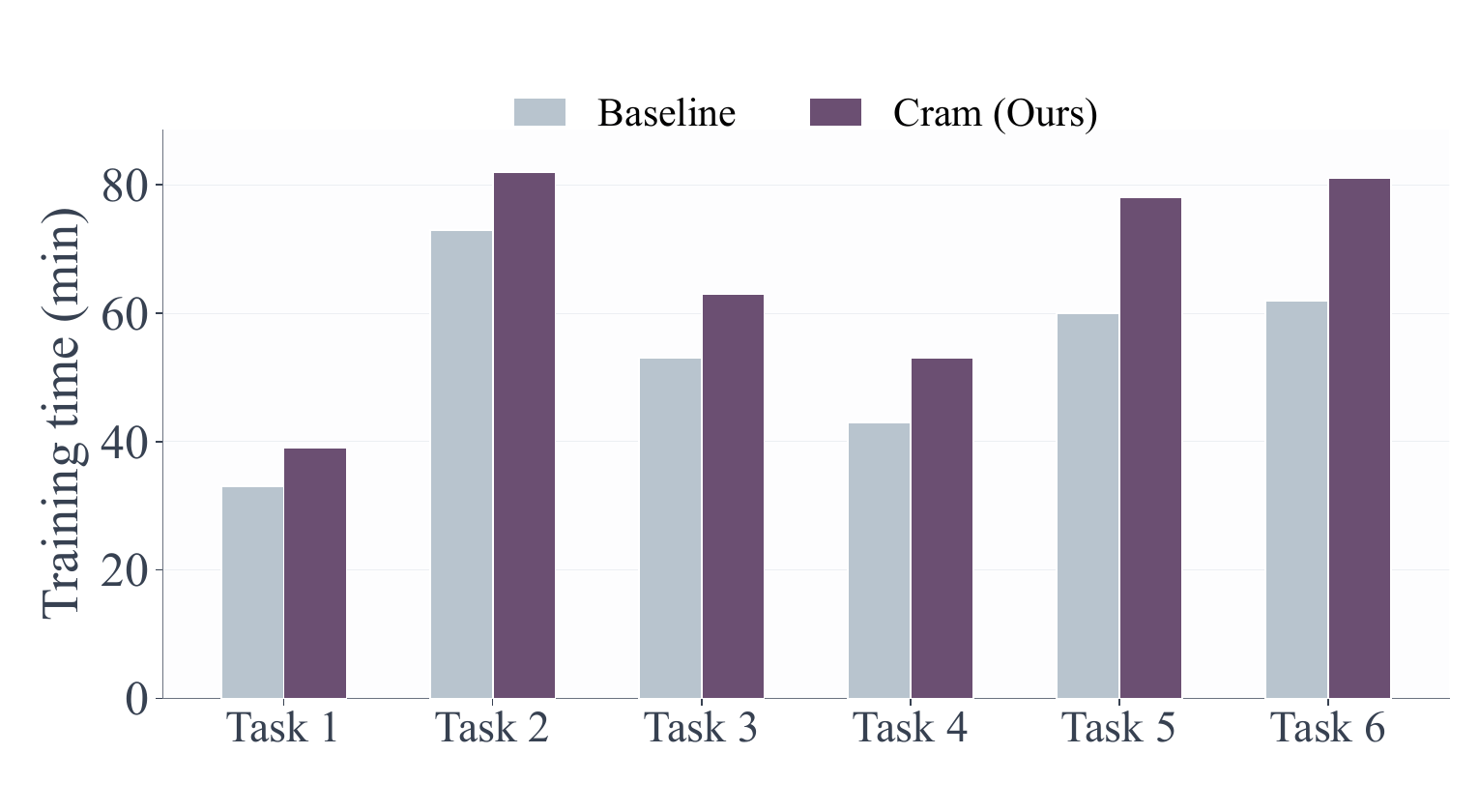}
  \caption{Training time comparison on UCIT.}
  \label{fig:train_time}
    \vspace{-1mm}
\end{figure}

\begin{table}[!t]
\centering
\footnotesize
\setlength{\tabcolsep}{4pt}
\caption{Resource efficiency comparison.}
\label{tab:efficiency}
\begin{tabular*}{\columnwidth}{@{\extracolsep{\fill}}lcc@{}}
\toprule
\textbf{Method} & \textbf{Memory (MiB)} & \textbf{Throughput (samples/s)} \\
\midrule
SAME & 28,971 & 3.96 \\
MoE-LoRA & 28,561 & 3.49 \\
\textbf{\hhh{} (Ours)} & 29,055 & \textbf{4.06} \\
\bottomrule
\end{tabular*}
\end{table}

\subsection{Sensitivity and Robustness Analysis}

We evaluate the robustness of \hhh{} from two perspectives: hyperparameter and prompt variations.

\noindent\textbf{Hyperparameter Sensitivity.}
Using the same hyperparameter configuration on both UCIT and TriGap yields strong performance, suggesting that \hhh{} does not rely on benchmark-specific tuning. As shown in Tab.~\ref{tab:hyperparam}, varying $\theta$, $\sigma$, and the warm-up ratio causes little change in accuracy, demonstrating robustness to hyperparameter choices.

\begin{table}[!t]
\centering
\footnotesize
\setlength{\tabcolsep}{4pt}
\renewcommand{\arraystretch}{1.05}
\caption{Hyperparameter sensitivity on UCIT.}
\label{tab:hyperparam}
\resizebox{\columnwidth}{!}{%
\begin{tabular}{@{}l|ccc|ccc|ccc@{}}
\hline
\textbf{Param.} & \multicolumn{3}{c|}{\textbf{$\theta$}} & \multicolumn{3}{c|}{\textbf{$\sigma$}} & \multicolumn{3}{c@{}}{\textbf{warm-up}} \\
\cline{2-10}
\textbf{Value} & 0.08 & \textbf{0.10} & 0.15 & 1.0 & \textbf{2.0} & 3.0 & 0.03 & \textbf{0.05} & 0.10 \\
\hline
\textbf{Acc (\%)} & 72.4 & \textbf{74.7} & 73.7 & 72.6 & \textbf{74.7} & 71.4 & 70.2 & \textbf{74.7} & 73.2 \\
\hline
\end{tabular}%
}
  \vspace{-2mm}
\end{table}

\begin{figure}[!t]
  \centering
  \includegraphics[width=0.70\columnwidth]{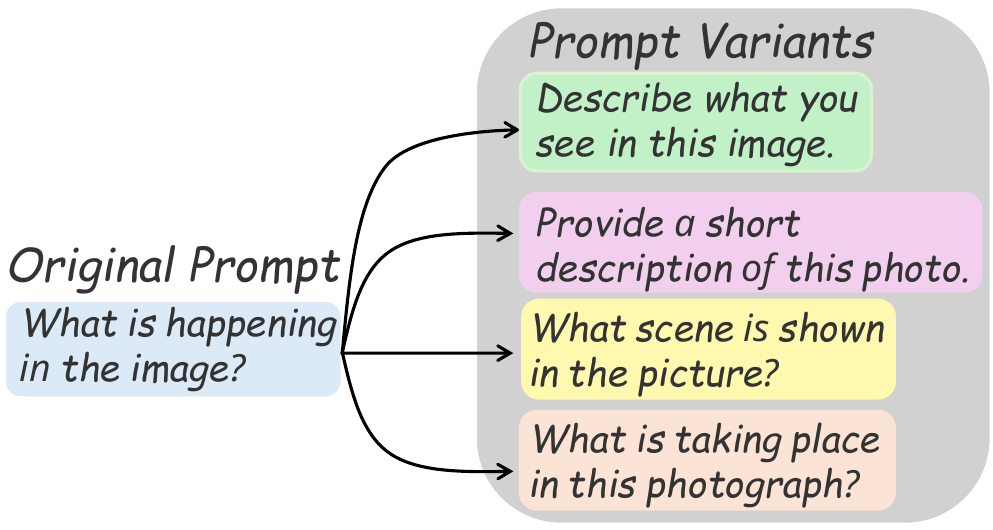}
  \caption{An example of prompt paraphrasing for Flickr30k. The original prompt is rewritten into multiple semantically equivalent variants while preserving the required output format.}
  \label{fig:prompt-variants}
\end{figure}

\begin{table}[!t]
\centering
\setlength{\tabcolsep}{2pt}
\renewcommand{\arraystretch}{1.05}
\caption{Robustness to prompt paraphrasing. T1--T6 correspond to ImageNet-R, ArxivQA, VizWiz-Caption, IconQA, CLEVR-Math, and Flickr30k, respectively. $\Delta$Acc. is reported in percentage points, and Hit Rate is reported in percent.}
\label{tab:paraphrase}
\resizebox{\linewidth}{!}{%
\begin{tabular}{@{}lccccccc@{}}
\toprule
\textbf{Metric} & \textbf{T1} & \textbf{T2} & \textbf{T3} & \textbf{T4} & \textbf{T5} & \textbf{T6} & \textbf{Avg.} \\
\midrule
$\Delta$Acc. & +0.92 & -0.87 & +1.99 & -0.36 & -2.42 & -0.64 & -0.23 \\
Hit Rate(\%) & 100.00 & 100.00 & 100.00 & 99.88 & 100.00 & 100.00 & 99.98 \\
\bottomrule
\end{tabular}%
}
\end{table}

\noindent\textbf{Robustness to Prompt Paraphrasing.}
To test whether semantic routing depends on fixed prompt templates, we generate four semantically equivalent paraphrases for each UCIT instruction while preserving its required output format, as illustrated in Fig.~\ref{fig:prompt-variants}. We define $\Delta$Acc. as the paraphrased accuracy minus the original accuracy. The group hit rate denotes the fraction of paraphrases assigned to the same semantic group as the original instruction. As reported in Table~\ref{tab:paraphrase}, paraphrasing changes the average accuracy by only $-0.23$ points, while maintaining a 99.98\% hit rate, indicating that semantic routing does not rely on fixed prompt templates and remains robust to prompt perturbations.

\subsection{Further Analysis}
\label{subsec:further_analysis}

\begin{figure}[t]
  \centering
  \footnotesize
  \captionsetup[sub]{font=tiny}  
  \begin{subfigure}[b]{0.49\columnwidth}
    \centering
    \includegraphics[width=\linewidth]{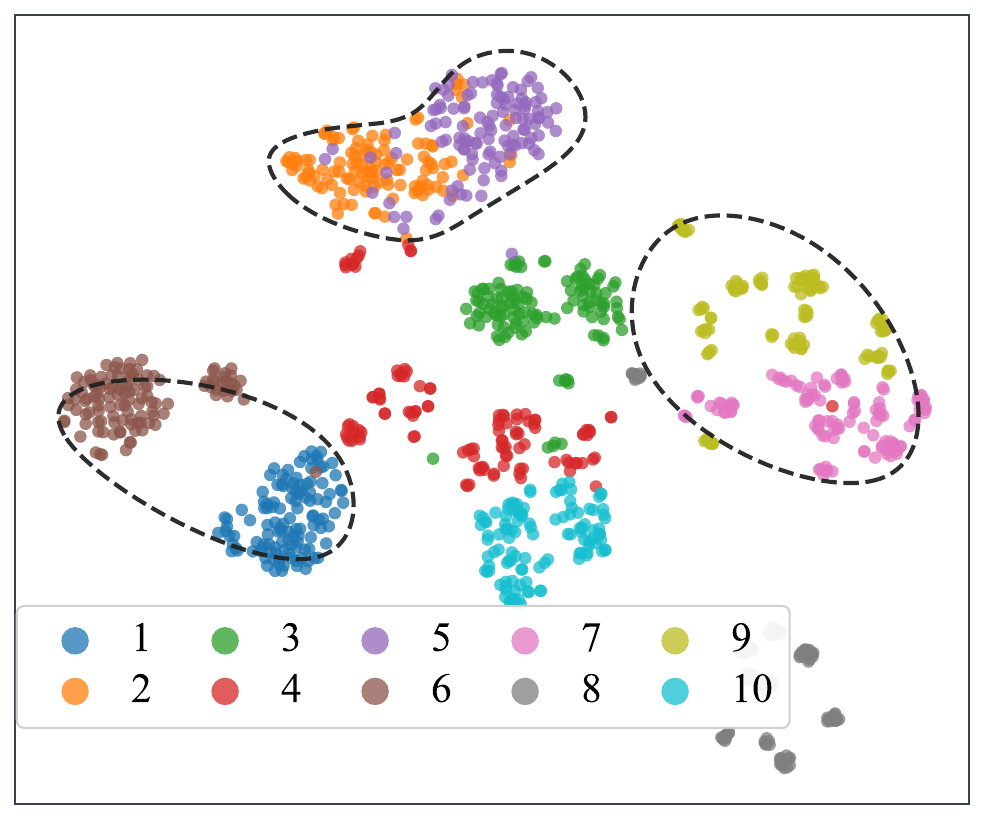}
    \caption{TriGap}
    \label{fig:clustering:a}
  \end{subfigure}
  \hfill
  \begin{subfigure}[b]{0.49\columnwidth}
    \centering
    \includegraphics[width=\linewidth]{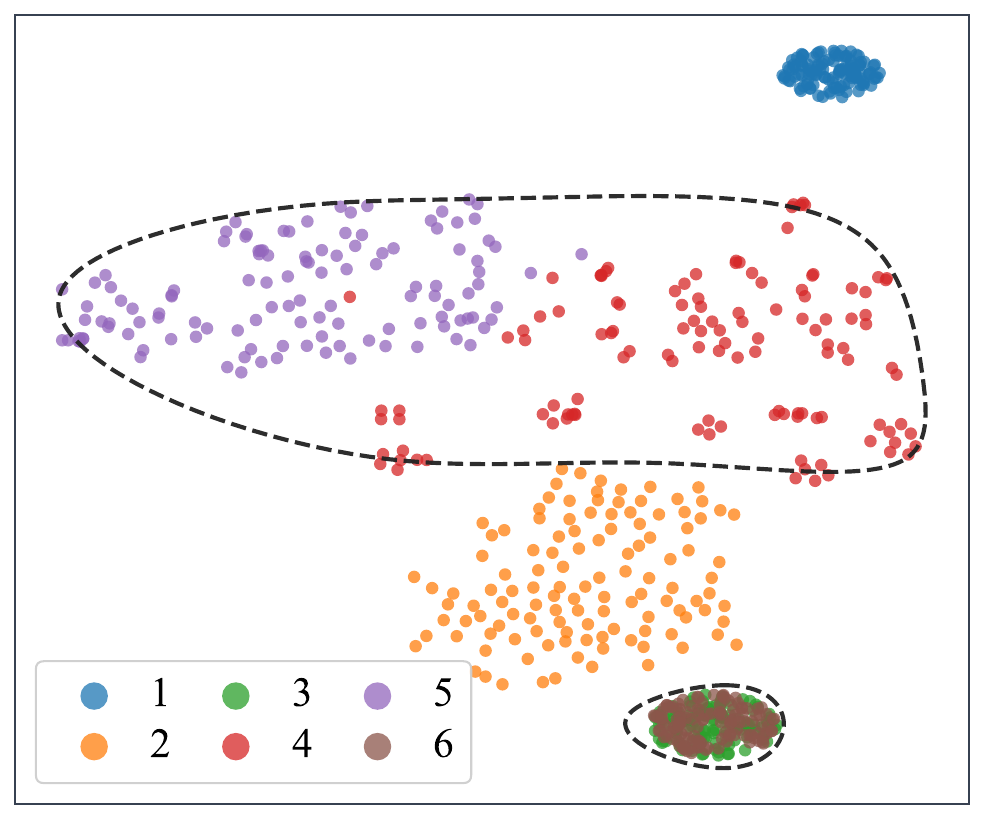}
    \caption{UCIT}
    \label{fig:clustering:b}
  \end{subfigure}
  \caption{Visualization of instruction embeddings on TriGap (a) and UCIT (b). Circled regions indicate instructions assigned to the same group.}
  \label{fig:clustering}
\end{figure}

\noindent\textbf{Clustering Analysis.}
We visualize instruction embeddings using t-SNE~\citep{van2008visualizing} to examine whether the learned groups reflect meaningful instruction structures. As shown in Fig.~\ref{fig:clustering}, clear clusters emerge across both benchmarks: PMCVQA and ArxivQA (multiple-choice) form tight groups, confirming effective grouping. On TriGap, IconQA exhibits diffuse boundaries due to heterogeneous templates, where our relative-confidence margin prevents hard assignment in ambiguous cases, avoiding cross-format interference. This ensures shared adaptation occurs only for unambiguously aligned instructions.

\begin{figure}[t]
  \centering
  \includegraphics[width=1\columnwidth]{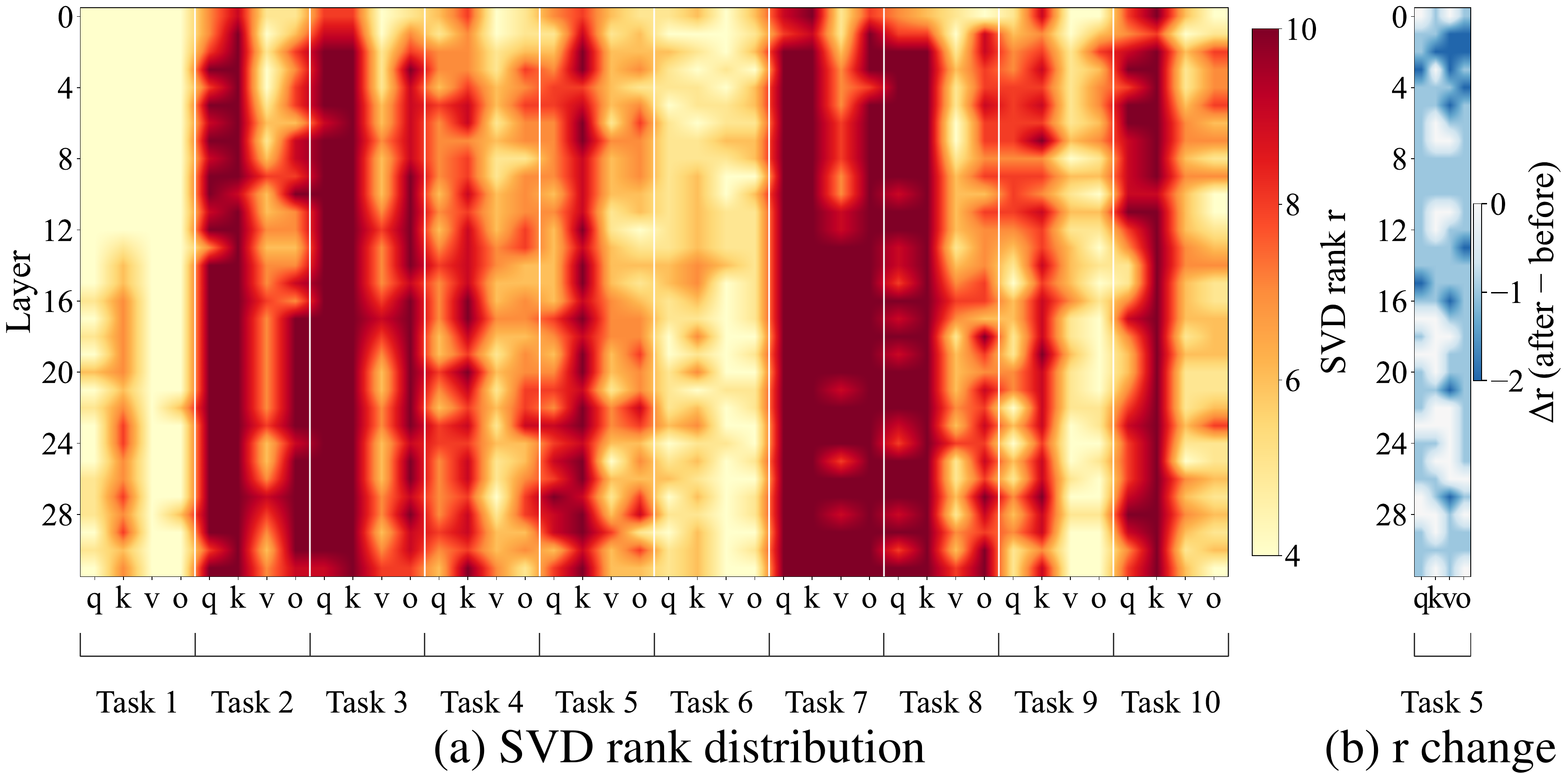}
  \caption{Adaptive rank allocation analysis. (a) Rank distribution across tasks, layers, and modules. (b) Effect of orthogonality filtering: rank allocation for task~5 before/after projecting out historical subspaces.}
  \label{fig:rank_analysis}
\end{figure}

\noindent\textbf{Rank Allocation Analysis.} To analyze how our method dynamically distributes parameter budgets, we examine the rank allocation across tasks and attention modules, as shown in Fig.~\ref{fig:rank_analysis}(a). The results reveal three systematic patterns: (1) \textit{Complexity-driven scaling}, where capacity correlates with task difficulty (e.g., Roadside receives high ranks, while InfographicVQA requires minimal parameters); (2) \textit{Asymmetric module demands}, exhibiting a consistent $K > Q > O > V$ hierarchy across attention projections aligned with their functional roles; and (3) \textit{Intra-cluster reuse}, where tasks within the same semantic cluster receive progressively lower ranks (e.g., Task~5 vs.~Task~2), directly evidencing efficient knowledge transfer. This efficiency stems from the orthogonality filtering, which automatically discards parameter directions already mastered by historical experts. As illustrated in Fig.~\ref{fig:rank_analysis}(b), the filtering predominantly prunes redundant components in lower-layer $V/O$ projections, confirming that historical subspace overlap concentrates on input-proximal visual features. Consequently, \hhh{} preserves foundational representations while dedicating newly allocated capacity to higher-level reasoning pathways.

\begin{figure}[t]
  \centering
  \footnotesize
  \captionsetup[sub]{font=tiny}  
  \begin{subfigure}[b]{0.495\columnwidth}
    \centering
    \includegraphics[width=\linewidth]{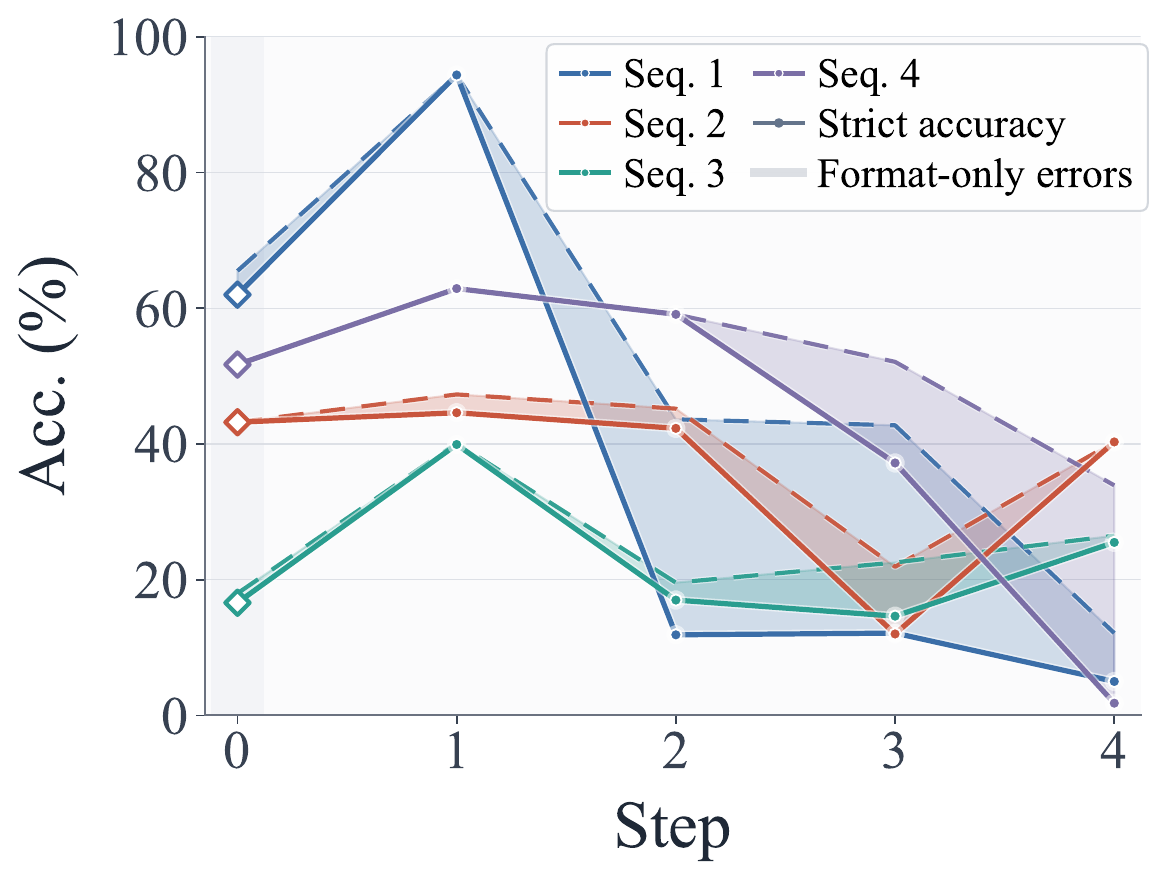}
    \caption{Baseline}
    \label{fig:exp2:a}
  \end{subfigure}
  \hfill
  \begin{subfigure}[b]{0.495\columnwidth}
    \centering
    \includegraphics[width=\linewidth]{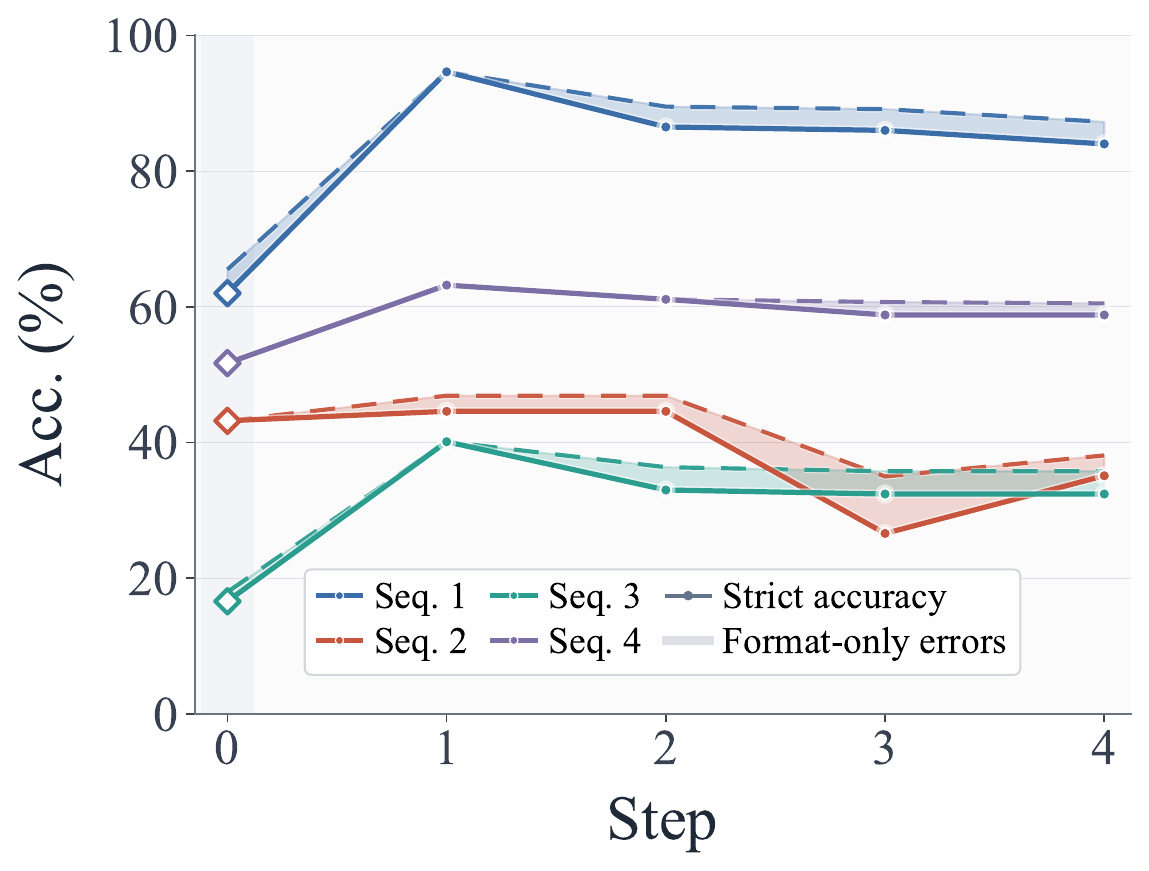}
    \caption{\hhh{} (Ours)}
    \label{fig:exp2:b}
  \end{subfigure}
    \caption{Sequential format adaptation. We construct four instruction-response conventions (\textbf{A}=MCQLetter (letter answer), \textbf{B}=MCQText (text answer), \textbf{C}=Short, \textbf{D}=Yes/No) from sampled data of ArxivQA by varying only the instruction-response format. For each training order (Seq. 1--4: ABCD/BCDA/CDAB/DABC), we evaluate the model on the first trained format after every step. \textit{Strict accuracy} requires both format adherence and content correctness; \textit{content-correct ceiling} counts format-violating but semantically correct responses.}
    \vspace{-1mm}
  \label{fig:exp2}
\end{figure}

\noindent\textbf{Format Interference and Instruction-Following Dynamics.}
To further probe how format heterogeneity interferes with sequential adaptation, we extend the single-step analysis from Fig.~\ref{fig:pre1_1} into a multi-step continual setting, as shown in Fig.~\ref{fig:exp2}. 
\textit{(1) Different instruction formats exhibit varying susceptibility to interference.}
Strongly constrained conventions (e.g., \textit{MCQLetter} and \textit{Yes/No}) degrade sharply when followed by heterogeneous tasks, whereas more flexible formats (e.g., \textit{Short}) show greater resilience.
\textit{(2) Does the model merely overfit to answer formats?} 
Evidence suggests otherwise. For tasks with aligned instruction understanding, such as \textit{MCQLetter} and \textit{MCQText}, they fundamentally share the core capability to extract correct answers. Consequently, training on \textit{MCQLetter} effectively enhances instruction-following capability on \textit{MCQText}, significantly mitigating performance forgetting. 
However, this transfer pattern is inherently \textit{asymmetric}: training on more permissive formats (e.g., \textit{MCQText}) severely disrupts strict boundaries (e.g., \textit{MCQLetter}), rather than yielding positive transfer. This confirms that while the model is genuinely acquiring instruction-following capabilities, its output behavior remains highly sensitive to surface format conventions. Rigid syntactic templates are thus highly vulnerable to \textit{relaxation-induced drift}, which \hhh{} effectively mitigates by isolating format subspaces to sustain stable performance.

\begin{figure}[t]
  \centering
  \begin{subfigure}[b]{0.401\columnwidth}
    \centering
    \includegraphics[width=\linewidth]{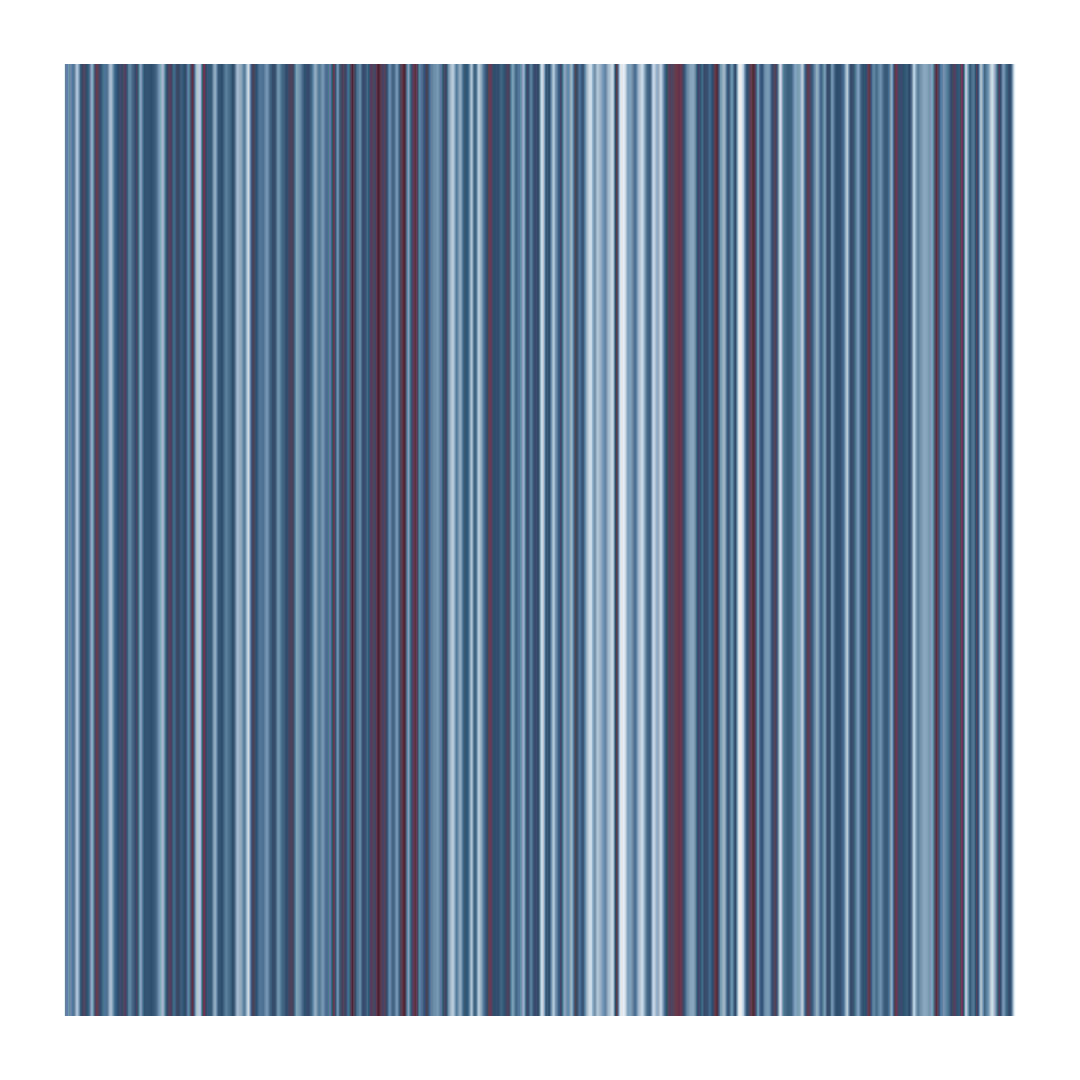}
    \caption{Before}
    \label{fig:overlap:before}
  \end{subfigure}
  \hfill
  \begin{subfigure}[b]{0.571\columnwidth}
    \centering
    \includegraphics[width=\linewidth]{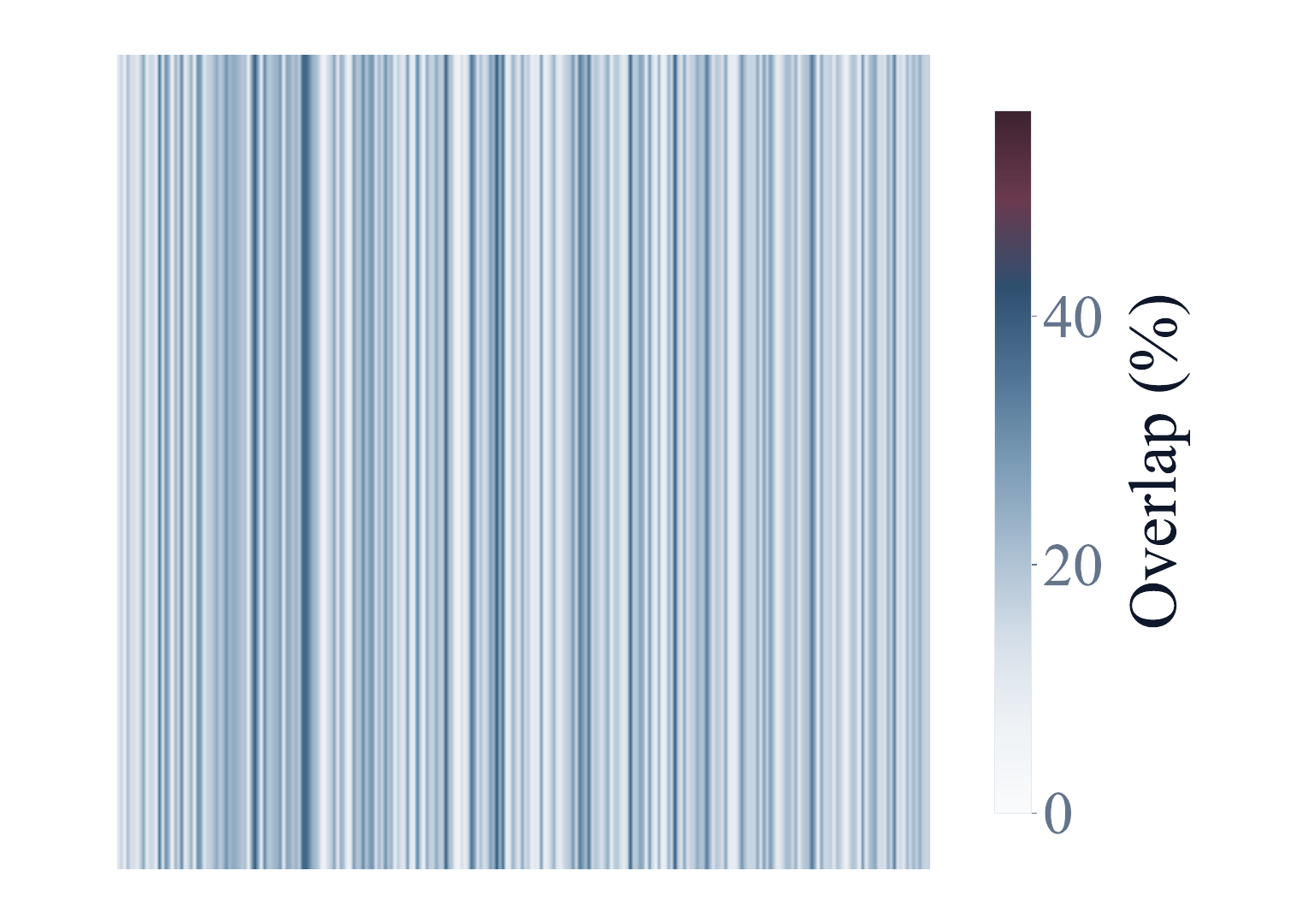}
    \caption{After}
    \label{fig:overlap:after}
  \end{subfigure}
  \caption{Overlap between $\Delta \mathbf{W}$ and a historical expert before and after orthogonality filtering. Each column of $\Delta \mathbf{W}$ is colored by its projection ratio onto the column space of a reference historical expert.}
  \label{fig:overlap_comparison}
\end{figure}

\noindent\textbf{Effect of Orthogonality Filtering.}
Figure~\ref{fig:overlap_comparison} visualizes the projection of each column of $\Delta\mathbf{W}$ onto the column space of a historical expert before and after orthogonality filtering. The average overlap decreases from 36.5\% to 18.1\%, indicating that the filtering substantially reduces alignment between new updates and historical subspaces.

\begin{table}[!t]
\centering
\footnotesize
\setlength{\tabcolsep}{3pt}
\caption{OOD generalization on MMMU after UCIT training.}
\label{tab:mmmu-ood}
\resizebox{\columnwidth}{!}{%
\begin{tabular}{lccccccc}
\toprule
\textbf{Method} & \textbf{Art} & \textbf{Bus.} & \textbf{Sci.}
& \textbf{H\&M} & \textbf{Hum.} & \textbf{T\&E} & \textbf{Avg.} \\
\midrule
Zero-shot & 43.85 & 25.35 & 23.74 & 32.93
& \textbf{48.47} & 28.16 & 33.75 \\
\textbf{\hhh{} (Ours)} & \textbf{44.51} & \textbf{26.19} & \textbf{25.93}
& \textbf{33.76} & 47.34 & \textbf{33.96} & \textbf{35.28} \\
\bottomrule
\end{tabular}%
}
\end{table}

\noindent\textbf{OOD Generalization.}
\label{subsec:ood}
To assess transfer beyond the training tasks, we test UCIT-trained models on MMMU~\citep{yue2024mmmu}. As shown in Table~\ref{tab:mmmu-ood}, \hhh{} achieves an average accuracy of 35.28, outperforming zero-shot by 1.53 points and improving in five of six domains. The largest gains occur in Tech \& Engineering (+5.80) and Science (+2.19), while Humanities declines slightly (-1.13). These results indicate that \hhh{} preserves transferable visual representations and generalizes beyond the UCIT task distribution.

\section{Conclusion}
\label{sec:conclusion}
We propose \hhh{}, which mitigates catastrophic forgetting in MCIT by decoupling instruction isolation from visual adaptation. Through semantic grouping, adaptive instantiation, and centroid-routing, \hhh{} achieves superior retention and parameter efficiency on diverse benchmarks, offering a scalable solution for rehearsal-free MCIT.

\section*{Acknowledgments}
This work is partially supported by the Basic Research Program of Jiangsu (BK20251251), NSFC (62506160), JSTJ2025-147, Fundamental and Interdisciplinary Disciplines Breakthrough Plan of the Ministry of Education of China (No. JYB2025XDXM118), the 111 Center (No. B26023), and the Collaborative Innovation Center of Novel Software Technology and Industrialization. \looseness=-1

\section*{Limitations}
While \hhh{} demonstrates strong performance on current benchmarks, it has not yet been validated on significantly longer task sequences. Extending evaluation to larger-scale continual streams remains an important direction for future work.

\section*{Potential Risks}
This work improves parameter-efficient continual adaptation of Multimodal Large Language Models (MLLMs) under sequential instruction tuning. While our experiments are conducted solely on public academic benchmarks (UCIT, TriGap, and MMMU), such adaptation methods, if applied to unsafe, biased, or privacy-sensitive multimodal data, could be misused to amplify harmful content generation, reinforce dataset biases, or leak sensitive visual information. We do not collect any private user data, and all evaluations are restricted to standard research datasets. 

\section*{Use of AI Assistants}
During the preparation of this manuscript, we only used AI Assistants to assist with grammar checking and language polishing.

\bibliography{custom}

\appendix

\clearpage
\section{Pseudocode}
\label{sec:pseudocode}

\begin{algorithm}[H]
\small
\caption{\hhh{} Training (per task $t$)}
\label{alg:cram}
\begin{algorithmic}[1]
\Require $\mathcal{D}_t$; frozen $(\phi,\pi,f,\psi)$; groups $\{\mathbf{c}_g\}$, pools $\{\mathcal{K}_g\}$
\Ensure New expert $(\mathbf{A}_t,\mathbf{B}_t)$ and centroid $\mathbf{w}_t$

\State \textbf{Level-1 routing.} $\mathbf{e} \leftarrow \psi(q) / \|\psi(q)\|_2$; assign $g^*$ by Eq.~\eqref{eq:assignment_margin}--\eqref{eq:center_update}
\State Allocate expert slot $k_t$ in $\mathcal{K}_{g^*}$; init buffer $\mathbf{W}_{\mathcal{B}}$

\State \textbf{Warm-up.} Train $\mathbf{W}_{\mathcal{B}}$ with $\mathcal{L}_{\mathrm{NLL}}$ (Eq.~\eqref{eq:nll_loss}); accumulate $\boldsymbol{\xi} = \mathrm{MeanPool}(\phi(v))$ to init $\mathbf{w}_{k_t}$

\State \textbf{Instantiation (SVD-Filter).}
\State \hspace{1.5em} Perform SVD: $\Delta\mathbf{W}_{\mathcal{B}} = \mathbf{U}\boldsymbol{\Sigma}\mathbf{V}^\top$ \hfill \Comment{Eq.~\eqref{eq:svd_decomp}}
\State \hspace{1.5em} Keep singular values $\sigma_p > \tau_{\mathrm{alloc}}$
\State \hspace{1.5em} Build $\mathbf{Q}$ from $[\mathbf{A}_1^\top, \mathbf{A}_2^\top, \ldots]$; filter by $\alpha_p$ (Eq.~\eqref{eq:orthogonality_score})
\State \hspace{1.5em} Reconstruct $(\mathbf{A}_{k_t},\mathbf{B}_{k_t})$ via Eq.~\eqref{eq:expert_reconstruction}

\State \textbf{Stable training.} 
\State \hspace{1.5em} $\omega_{i,k}$ from Eqs.~\eqref{eq:routing_weights}--\eqref{eq:rbf_score}; forward
\State \hspace{1.5em} Minimize $\mathcal{L} = \mathcal{L}_{\mathrm{NLL}} + \mathcal{L}_{\mathrm{dec}}$ (Eqs.~\eqref{eq:nll_loss}, \eqref{eq:decoupling_loss}, \eqref{eq:total_loss})

\State Freeze $(\mathbf{A}_{k_t},\mathbf{B}_{k_t},\mathbf{w}_{k_t})$; reset $\mathbf{W}_{\mathcal{B}}$
\end{algorithmic}
\end{algorithm}

\bigskip

\begin{algorithm}[H]
\small
\caption{\hhh{} Inference}
\label{alg:cram_infer}
\begin{algorithmic}[1]
\Require Sample $(v,q)$; frozen $\{\mathbf{c}_g\}$, $\{\mathcal{K}_g\}$, experts $\{(\mathbf{A}_k,\mathbf{B}_k)\}$, centroids $\{\mathbf{w}_k\}$
\Ensure Response $\hat{y}$

\State $\mathbf{e}(q) \leftarrow \psi(q) / \|\psi(q)\|_2$; \quad $g^* \leftarrow \arg\max_g \frac{\mathbf{c}_g^\top \mathbf{e}(q)}{\|\mathbf{c}_g\|_2}$

\State $\boldsymbol{\xi} \leftarrow \mathrm{MeanPool}(\phi(v))$
\State $\omega_k \leftarrow \rho(\boldsymbol{\xi},\mathbf{w}_k) / \sum_{j \in \mathcal{K}_{g^*}} \rho(\boldsymbol{\xi},\mathbf{w}_j)$ \hfill \Comment{Eqs.~\eqref{eq:rbf_score}--\eqref{eq:routing_weights}}
\State $z \leftarrow [\pi(\phi(v)); \mathrm{Emb}(q)]$; decode $\hat{y}$
\end{algorithmic}
\end{algorithm}

\section{Experimental Details of Format Interference}
\label{app:pre1}

To isolate the impact of instruction format heterogeneity, we design a controlled experiment using the ArxivQA~\citep{li2024arxiv} dataset. The goal is to test whether training on a single output convention degrades performance on other conventions despite identical visual and semantic content.

\subsection{Data Construction}

The original ArxivQA samples are multiple-choice questions with four options. We start from the \texttt{Letter} format (the original format), where the model answers with a single letter. From this source, we generate three derived formats by programmatically converting the instruction and answer fields while keeping the image and the semantic correct answer unchanged.

\smallskip
\noindent
\begin{description}
    \item[\textbf{Letter (MCQLetter)}:] The model outputs the option letter.  
        Instruction ends with: \texttt{``Answer with the option's letter from the given choices directly.''}  
        Answer: a single letter, e.g., ``B''.

    \item[\textbf{Text (MCQText)}:] The model outputs the full text of the chosen option.  
        Options are rewritten as \texttt{``A. ...''}, \texttt{``B. ...''}, etc.  
        Instruction: \texttt{``Choose the correct answer and reply with the full text of that choice, exactly as written among the options. Do not use the option letter.''}  
        Answer: the full option text, e.g., \(10^{-1}\) Mpc$^{-1}$.

    \item[\textbf{Short (Short)}:] The model outputs a concise free-form answer. Options are removed. To control difficulty and avoid overly long or complex answers, we retain only samples whose correct option text contains at most two tokens. Instruction: “Answer with the exact wording of the correct choice. Reply with the answer only, without the option label (e.g., A, B, C, or D) or any additional explanation.” Answer: the short text, e.g., “0.48”.

    \item[\textbf{Yes/No (YesNo)}:] The model outputs \texttt{``yes''} or \texttt{``no''}.  
            Two numbered statements (the correct option and a distractor) are presented, exactly one of which is correct. The order is randomly shuffled per sample.  
            To control the difficulty and align with the semantic complexity of the other three formats, we avoid simple true/false questions that directly ask whether a single claim is correct. Instead, this two-statement comparison design requires the model to identify the correct option, ensuring that format conventions rather than answer triviality remain the primary controlled variable.  
            Instruction: \texttt{``Two numbered statements are given about the same question. Exactly one is correct. If statement (1) is correct, reply with exactly: yes. If statement (2) is correct, reply with exactly: no.''}  
            Answer: \texttt{``yes''} or \texttt{``no''}.
\end{description}
\smallskip
After conversion, we align the training and test set sizes across all four formats. For the Short format, the token-length filter (only retaining samples where the correct option text contains at most two tokens) inevitably reduces the number of usable samples. Therefore, we take the intersection of samples that survive this filter and the corresponding samples in the other three formats, ensuring that all four variants are evaluated on exactly the same set of underlying questions. 

\subsection{Training and Evaluation Protocol}

For each format \(f \in \{\text{Letter},\text{Text},\text{Short},\text{Yes/No}\}\), we perform parameter-efficient fine-tuning (PEFT) by inserting LoRA adapters of rank \(r=8\) into the attention layers (i.e., \(q\_proj\), \(v\_proj\), \(k\_proj\), \(o\_proj\)) of LLaVA-v1.5-7B. During training, only the LoRA parameters are learnable; the backbone MLLM remains frozen. The model is trained on the training set of that format for one epoch using the standard language modeling loss. After training, we evaluate the model on the test sets of \textit{all four} formats, yielding a \(4\times 4\) accuracy matrix. The zero-shot baseline (LLaVA-v1.5-7B without fine-tuning) is also evaluated on all four test sets.

\subsection{Error Analysis: Format-Wrong Content-Correct (FWCC)}

To distinguish semantic forgetting from pure format violation, we define a sample as \textit{format-wrong but content-correct} (FWCC) if the predicted answer is semantically equivalent to the ground truth but does not adhere to the required output convention of the test format. For each test format, we use a dedicated evaluator that first checks format compliance (e.g., single letter for Letter, exactly ``yes''/``no'' for Yes/No) and, if violated, attempts to extract the semantic content using rules (e.g., mapping option letters to option texts, normalizing case). The FWCC ratio is the proportion of incorrect predictions that are semantically correct but format-violating.

\subsection{Correspondence with Main Paper Figures}

The results are summarised in Fig.~\ref{fig:pre1} in the main paper. Fig.~\ref{fig:pre1}(a) shows the accuracy change relative to zero-shot for each train–test format pair. The diagonal entries (training and testing on the same format) all achieve high accuracy, confirming that each format is learnable in isolation. However, the off-diagonal entries reveal severe degradation: training on any single format leads to a substantial drop in accuracy when testing on other formats, even though the underlying visual and semantic content remain identical. This indicates that the model learns format-specific surface patterns that do not transfer across conventions.

Fig.~\ref{fig:pre1}(b) further reports the FWCC (Format-Wrong Content-Correct) ratio among errors. A high FWCC ratio indicates that the model still retrieves the correct semantic answer but fails to adhere to the required output convention of the test format. Together, these two panels confirm that heterogeneous instruction formats inherently trigger severe interference, acting as a primary driver of catastrophic forgetting in shared-parameter regimes. Importantly, this interference stems from format-surface misalignment rather than loss of task semantics.

\begin{table}[t]
\centering
\scriptsize                      
\setlength{\tabcolsep}{3pt}     
\renewcommand{\arraystretch}{1.1}
\caption{Implementation details.}
\begin{tabular}{@{}l|c|c|c@{}}
\hline
\textbf{Method} & \textbf{Insertion} & \textbf{Epoch} & \textbf{Config. (UCIT/TriGap)} \\
\hline
FT-LoRA            & Attn+FFN               & 1 & 96/80 (rank) \\
Replay-LoRA        & Attn+FFN               & 1 & 96/80 (rank) \\
HiDe-LLaVA         & Attn+FFN ($N$ exp.)    & 1 & 16/8 (rank) \\
CL-MoE             & FFN ($N$ exp.)         & 1 & 16/8 (rank) \\
DISCO              & FFN ($N$ exp.)         & 1 & 16/8 (rank) \\
ModalPrompt        & Soft prompts ($\ell=10$) & 4 & 10 (prefix) \\
MoE-LoRA            & FFN ($N$ exp.)         & 1 & 16/8 (rank) \\
\textbf{CRAM (Ours)} & Attn ($N$ exp.)  & 1 & 7.94/7.63 (rank) \\
\hline
\end{tabular}
\end{table}

\noindent

\section{Implementation details}
\label{app:implementation_details}
For MoE-based baselines (HiDe-LLaVA, CL-MoE, DISCO, MoE-LoRA), the reported rank is the per-expert average: total rank $r$ is divided by the number of tasks $N$ (UCIT: $N=6$, TriGap: $N=10$).
For CRAM, the adaptive allocation mechanism dynamically determines the per-expert rank, resulting in the reported average values (7.94 for UCIT, 7.63 for TriGap).
All methods inject adapters into the attention (Attn) or feed-forward network (FFN) layers as indicated.
ModalPrompt uses soft prompts of length $\ell=10$ instead of low-rank adapters.

\section{SVD Efficiency Analysis}
\label{app:svd_efficiency}

To quantify the computational cost of expert instantiation, we compare SVD with randomized SVD~\cite{Random-SVM} using two power iterations. As shown in Table~\ref{tab:svd_efficiency}, randomized SVD reduces the decomposition time from 409.5~s to 0.67~s, while the performance decreases slightly.

\begin{table}[t]
\centering
\footnotesize
\setlength{\tabcolsep}{4pt}
\caption{Comparison of SVD variants.}
\label{tab:svd_efficiency}
\begin{tabular*}{\columnwidth}{@{\extracolsep{\fill}}lcc@{}}
\toprule
\textbf{Variant} & \textbf{Time (s)} & \textbf{Final Performance $\Bar{\mathcal{A}}$} \\
\midrule
Full SVD & 409.5 & 74.73 \\
\textbf{Randomized SVD} & \textbf{0.67} & 74.03 \\
\bottomrule
\end{tabular*}
\end{table}

\section{Geometric Properties of the Orthogonality Score}
\label{app:ortho_score}
Let $\mathbf{Q} \in \mathbb{R}^{d \times r_{\mathrm{hist}}}$ be an
orthonormal basis for the historical subspace
$\mathcal{V}_{\mathrm{hist}} = \operatorname{col}(\mathbf{M}_{\mathrm{hist}})$,
where $r_{\mathrm{hist}} = \operatorname{rank}(\mathbf{M}_{\mathrm{hist}})$.
Thus, $\mathbf{Q}^\top \mathbf{Q} = \mathbf{I}_{r_{\mathrm{hist}}}$.

Define
\begin{align*}
\mathbf{P}      &= \mathbf{Q}\mathbf{Q}^\top
&&\text{(onto $\mathcal{V}_{\mathrm{hist}}$)}, \\
\mathbf{P}_\perp &= \mathbf{I} - \mathbf{Q}\mathbf{Q}^\top
&&\text{(onto $\mathcal{V}_{\mathrm{hist}}^\perp$)}.
\end{align*}
Both are symmetric, idempotent, and satisfy
$\mathbf{P}\mathbf{P}_\perp = \mathbf{0}$.

For a candidate SVD direction $\mathbf{v}_p \in \mathbb{R}^d$,
the orthogonality score is
\begin{equation}
\alpha_p
= \frac{\| \mathbf{P}_\perp \mathbf{v}_p \|_2}
       {\| \mathbf{v}_p \|_2}
= \frac{\| (\mathbf{I} - \mathbf{Q} \mathbf{Q}^\top) \mathbf{v}_p \|_2}
       {\| \mathbf{v}_p \|_2}.
\label{eq:alpha_def}
\end{equation}

\begin{proposition}[Bounds and Energy Interpretation]
$\alpha_p \in [0, 1]$.
$\alpha_p = 0$ iff $\mathbf{v}_p \in \mathcal{V}_{\mathrm{hist}}$
(fully redundant);
$\alpha_p = 1$ iff $\mathbf{v}_p \perp \mathcal{V}_{\mathrm{hist}}$
(fully novel).
Moreover, $\alpha_p^2$ equals the fraction of
$\|\mathbf{v}_p\|_2^2$ lying in $\mathcal{V}_{\mathrm{hist}}^\perp$.
\end{proposition}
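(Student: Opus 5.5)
The plan is to prove the energy interpretation first, via an orthogonal (Pythagorean) decomposition, and read off the bounds and both equality cases from that single identity. Throughout I assume $\mathbf{v}_p \neq \mathbf{0}$, which holds because $\mathbf{v}_p$ is a right singular vector (unit norm); the general nonzero case only requires the normalization in Eq.~\eqref{eq:alpha_def}.

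\emph{Step 1: decomposition.} Write $\mathbf{v}_p = \mathbf{P}\mathbf{v}_p + \mathbf{P}_\perp\mathbf{v}_p$, which holds since $\mathbf{P}+\mathbf{P}_\perp=\mathbf{I}$. The two pieces are orthogonal, because symmetry gives
\[
\langle \mathbf{P}\mathbf{v}_p, \mathbf{P}_\perp\mathbf{v}_p\rangle = \mathbf{v}_p^\top \mathbf{P}^\top\mathbf{P}_\perp \mathbf{v}_p = \mathbf{v}_p^\top \mathbf{P}\mathbf{P}_\perp\mathbf{v}_p = 0,
\]
using $\mathbf{P}\mathbf{P}_\perp=\mathbf{0}$. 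This last identity itself comes from $\mathbf{Q}^\top\mathbf{Q}=\mathbf{I}_{r_{\mathrm{hist}}}$, since $\mathbf{Q}\mathbf{Q}^\top\mathbf{Q}\mathbf{Q}^\top=\mathbf{Q}\mathbf{Q}^\top$. Hence
\[
\|\mathbf{v}_p\|_2^2=\|\mathbf{P}\mathbf{v}_p\|_2^2+\|\mathbf{P}_\perp\mathbf{v}_p\|_2^2 .
\]
Dividing by $\|\mathbf{v}_p\|_2^2$ gives $\alpha_p^2 = \|\mathbf{P}_\perp\mathbf{v}_p\|_2^2/\|\mathbf{v}_p\|_2^2$. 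This is the energy fraction in $\mathcal{V}_{\mathrm{hist}}^\perp$, once we note that $\mathbf{P}_\perp$ is the orthogonal projector onto $\mathcal{V}_{\mathrm{hist}}^\perp$, so that $\mathbf{P}_\perp\mathbf{v}_p$ is exactly the component of $\mathbf{v}_p$ lying there. To justify the identification $\operatorname{range}(\mathbf{P})=\operatorname{col}(\mathbf{Q})=\mathcal{V}_{\mathrm{hist}}$, I will use that $\mathbf{Q}$ is an orthonormal basis of $\operatorname{col}(\mathbf{M}_{\mathrm{hist}})$.

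\emph{Step 2: bounds.} Both terms in the Pythagorean identity are nonnegative, so $0\le \alpha_p^2\le 1$. Since $\alpha_p\ge 0$ by definition, it follows that $\alpha_p\in[0,1]$.

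\emph{Step 3: equality cases.}
\begin{itemize}
\item $\alpha_p=0$ iff $\mathbf{P}_\perp\mathbf{v}_p=\mathbf{0}$, iff $\mathbf{v}_p=\mathbf{P}\mathbf{v}_p\in\operatorname{col}(\mathbf{Q})=\mathcal{V}_{\mathrm{hist}}$. For the converse, if $\mathbf{v}_p=\mathbf{Q}\mathbf{c}$ then $\mathbf{P}\mathbf{v}_p=\mathbf{Q}\mathbf{Q}^\top\mathbf{Q}\mathbf{c}=\mathbf{v}_p$.
\item $\alpha_p=1$ iff $\|\mathbf{P}\mathbf{v}_p\|_2=0$, iff $\mathbf{Q}^\top\mathbf{v}_p=\mathbf{0}$, since $\|\mathbf{Q}\mathbf{Q}^\top\mathbf{v}_p\|_2=\|\mathbf{Q}^\top\mathbf{v}_p\|_2$ by orthonormality of the columns. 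This holds iff $\mathbf{v}_p$ is orthogonal to every column of $\mathbf{Q}$, i.e.\ $\mathbf{v}_p\perp\mathcal{V}_{\mathrm{hist}}$.
\end{itemize}

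The only step needing genuine care is the identification $\operatorname{col}(\mathbf{Q})=\mathcal{V}_{\mathrm{hist}}$ when $\mathbf{M}_{\mathrm{hist}}$ is rank-deficient. A plain QR of $\mathbf{M}_{\mathrm{hist}}$ would then return too many columns, so I will rely on the stated convention that $\mathbf{Q}$ has exactly $r_{\mathrm{hist}}=\operatorname{rank}(\mathbf{M}_{\mathrm{hist}})$ orthonormal columns (e.g.\ via rank-revealing QR). The degenerate case of an empty historical pool ($\mathbf{Q}$ absent, $\mathbf{P}=\mathbf{0}$) gives $\alpha_p=1$ trivially.
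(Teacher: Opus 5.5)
Your proposal is correct and follows essentially the same route as the paper: split $\mathbf{v}_p = \mathbf{P}\mathbf{v}_p + \mathbf{P}_\perp\mathbf{v}_p$, apply Pythagoras, divide by $\|\mathbf{v}_p\|_2^2$, and read off the bounds and both equality cases. Your additional details are sound refinements of the paper's argument: the explicit check that $\mathbf{P}\mathbf{P}_\perp=\mathbf{0}$, the $\mathbf{Q}^\top\mathbf{v}_p=\mathbf{0}$ characterization for $\alpha_p=1$, the rank-deficient QR caveat (which the paper sidesteps by defining $\mathbf{Q}$ to have exactly $r_{\mathrm{hist}}$ columns), and the empty-pool case.
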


\begin{proof}
Decompose
$\mathbf{v}_p
= \mathbf{P}\mathbf{v}_p + \mathbf{P}_\perp\mathbf{v}_p
= \mathbf{v}_p^\parallel + \mathbf{v}_p^\perp$.
Since
$\langle \mathbf{v}_p^\parallel, \mathbf{v}_p^\perp \rangle = 0$,
\begin{equation}
\|\mathbf{v}_p\|_2^2
= \|\mathbf{v}_p^\parallel\|_2^2 + \|\mathbf{v}_p^\perp\|_2^2.
\label{eq:pythag}
\end{equation}
Dividing by $\|\mathbf{v}_p\|_2^2 \neq 0$ yields
$1 = \|\mathbf{v}_p^\parallel\|_2^2 / \|\mathbf{v}_p\|_2^2
   + \alpha_p^2$,
hence $0 \leq \alpha_p \leq 1$.
Boundary cases:
$\alpha_p = 0$ iff $\mathbf{v}_p^\perp = \mathbf{0}$,
i.e.\ $\mathbf{v}_p \in \operatorname{col}(\mathbf{Q})$;
$\alpha_p = 1$ iff $\mathbf{v}_p^\parallel = \mathbf{0}$,
i.e.\ $\mathbf{v}_p \perp \operatorname{col}(\mathbf{Q})$.
Finally,
$\alpha_p^2 = \|\mathbf{v}_p^\perp\|_2^2 / \|\mathbf{v}_p\|_2^2$
measures the energy fraction in $\mathcal{V}_{\mathrm{hist}}^\perp$.
Thresholding by $\tau_{\mathrm{orth}}$ retains only directions with
$\alpha_p > \tau_{\mathrm{orth}}$,
so at least $\tau_{\mathrm{orth}}^2$ of their energy lies outside
the historical subspace.
\end{proof}

\section{Gradient Dynamics of the Decoupling Loss}
\label{app:dec_loss}

The decoupling loss penalizes the new expert's response to historical
input directions:
\begin{equation}
\mathcal{L}_{\mathrm{dec}}
= \frac{1}{B} \sum_{i=1}^B \gamma_i \,
  \big\| \Delta\mathbf{W}_{\mathrm{new}}
         (\mathbf{Q}\mathbf{Q}^\top \mathbf{h}_i) \big\|_2^2,
\label{eq:dec_loss_def}
\end{equation}
where $\gamma_i = 1-\omega_{i,\mathrm{new}}$ measures reliance on historical experts,
$\omega_{i,\mathrm{new}}$ is the routing weight of the newly added expert, and
$\mathbf{h}_i \in \mathbb{R}^d$ denotes the input hidden state.
Let $\tilde{\mathbf{h}}_i = \mathbf{Q}\mathbf{Q}^\top \mathbf{h}_i$,
so
$\mathcal{L}_{\mathrm{dec}}
= \frac{1}{B} \sum_{i=1}^B \gamma_i \,
  \|\Delta\mathbf{W}_{\mathrm{new}} \tilde{\mathbf{h}}_i\|_2^2$.

\begin{proposition}[Gradient Structure]
The gradient
$\nabla_{\Delta\mathbf{W}_{\mathrm{new}}}\mathcal{L}_{\mathrm{dec}}$
is right-multiplied by $\mathbf{Q}\mathbf{Q}^\top$,
so gradient descent suppresses $\Delta\mathbf{W}_{\mathrm{new}}$
on historical input directions.
\end{proposition}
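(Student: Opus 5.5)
I will compute the gradient directly, treating $\Delta\mathbf{W}_{\mathrm{new}}$ as the free matrix variable and the weights $\gamma_i$ as constants. The computation writes each summand as a quadratic form in $\Delta\mathbf{W}_{\mathrm{new}}$. I then read off the right factor $\mathbf{Q}\mathbf{Q}^\top$, using that $\mathbf{P}=\mathbf{Q}\mathbf{Q}^\top$ is symmetric and idempotent (as established in Appendix~\ref{app:ortho_score}).

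\emph{Step 1 (per-sample gradient).} With $\tilde{\mathbf{h}}_i=\mathbf{P}\mathbf{h}_i$, we have $\|\Delta\mathbf{W}\tilde{\mathbf{h}}_i\|_2^2=\operatorname{tr}(\Delta\mathbf{W}\tilde{\mathbf{h}}_i\tilde{\mathbf{h}}_i^\top\Delta\mathbf{W}^\top)$. Its gradient with respect to $\Delta\mathbf{W}$ is $2\Delta\mathbf{W}\tilde{\mathbf{h}}_i\tilde{\mathbf{h}}_i^\top$. Summing over the batch gives
\begin{equation*}
\nabla_{\Delta\mathbf{W}_{\mathrm{new}}}\mathcal{L}_{\mathrm{dec}}=\tfrac{2}{B}\,\Delta\mathbf{W}_{\mathrm{new}}\,\mathbf{P}\,\mathbf{H}\,\mathbf{P},\qquad \mathbf{H}=\textstyle\sum_i\gamma_i\mathbf{h}_i\mathbf{h}_i^\top .
\end{equation*}
\emph{Step 2 (right factor).} Because $\mathbf{P}^2=\mathbf{P}$, the gradient $G$ satisfies $G=G\mathbf{P}$, so it is right-multiplied by $\mathbf{Q}\mathbf{Q}^\top$. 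Equivalently, $G\mathbf{P}_\perp=\mathbf{0}$.

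\emph{Step 3 (descent interpretation).} Split $\Delta\mathbf{W}=\Delta\mathbf{W}\mathbf{P}+\Delta\mathbf{W}\mathbf{P}_\perp$. Consider a step $\Delta\mathbf{W}\leftarrow\Delta\mathbf{W}-\eta G$ on $\mathcal{L}_{\mathrm{dec}}$ alone. The component $\Delta\mathbf{W}\mathbf{P}_\perp$ is untouched. The component on historical directions evolves as
\begin{equation*}
\Delta\mathbf{W}\mathbf{P}\leftarrow\Delta\mathbf{W}\mathbf{P}\bigl(\mathbf{I}-\tfrac{2\eta}{B}\mathbf{P}\mathbf{H}\mathbf{P}\bigr).
\end{equation*}
Since $\gamma_i\in[0,1]$, the matrix $\mathbf{P}\mathbf{H}\mathbf{P}$ is positive semidefinite. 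For small $\eta$ this map is therefore a contraction on the range of $\mathbf{P}\mathbf{H}\mathbf{P}$. Hence the historical-direction component shrinks, or at least does not grow, while novel directions are left free.

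\emph{Main obstacle.} The obstacle is dependence that the above treatment suppresses. First, $\gamma_i=1-\omega_{i,\mathrm{new}}$ depends on the centroids. Second, $\mathbf{h}_i$ may depend on earlier LoRA layers. Third, in practice the trainable parameters are $\mathbf{A},\mathbf{B}$ rather than $\Delta\mathbf{W}$ itself.

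I would handle these as follows. I would state the result for the gradient with respect to $\Delta\mathbf{W}_{\mathrm{new}}$, with $\gamma_i$ and $\mathbf{h}_i$ treated as stop-gradient constants, which is the natural reading of the statement. I would then remark on the chain rule for the factored form: $\nabla_{\mathbf{A}}=\mathbf{B}^\top G$ keeps the right factor $\mathbf{P}$, so the rows of $\mathbf{A}$ are pushed away from $\mathcal{V}_{\mathrm{hist}}$ in the same way.
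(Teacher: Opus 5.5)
Your proposal is correct and follows the same route as the paper: you compute $\nabla_{\Delta\mathbf{W}_{\mathrm{new}}}\mathcal{L}_{\mathrm{dec}}=\frac{2}{B}\sum_i\gamma_i(\Delta\mathbf{W}_{\mathrm{new}}\mathbf{Q}\mathbf{Q}^\top\mathbf{h}_i)\mathbf{h}_i^\top\mathbf{Q}\mathbf{Q}^\top$ directly with $\gamma_i$ held constant, and read it off as $\mathbf{G}\,\mathbf{Q}\mathbf{Q}^\top$. Your Step~3 contraction argument, using $\mathbf{P}\mathbf{H}\mathbf{P}\succeq 0$, is a useful sharpening of the paper's informal claim that minimization drives $\Delta\mathbf{W}_{\mathrm{new}}\mathbf{Q}\to\mathbf{0}$, because it correctly shows that only the historical directions actually excited by the batch inputs are shrunk.
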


\begin{proof}
Since $\gamma_i$ depends only on the routing weights, it is treated as
constant with respect to $\Delta\mathbf{W}_{\mathrm{new}}$.
From
$\frac{\partial}{\partial \mathbf{X}}
   \|\mathbf{X}\mathbf{y}\|_2^2
 = 2\mathbf{X}\mathbf{y}\mathbf{y}^\top$,
\begin{align}
\frac{\partial \mathcal{L}_{\mathrm{dec}}}
     {\partial \Delta\mathbf{W}_{\mathrm{new}}}
&= \frac{2}{B} \sum_{i=1}^B \gamma_i \,
   (\Delta\mathbf{W}_{\mathrm{new}} \tilde{\mathbf{h}}_i)
   \tilde{\mathbf{h}}_i^\top
   \nonumber \\
&= \frac{2}{B} \sum_{i=1}^B \gamma_i \,
   (\Delta\mathbf{W}_{\mathrm{new}}
    \mathbf{Q}\mathbf{Q}^\top \mathbf{h}_i)
   \mathbf{h}_i^\top \mathbf{Q}\mathbf{Q}^\top.
\label{eq:grad_final}
\end{align}
Thus, the gradient takes the form
$\mathbf{G}\,\mathbf{Q}\mathbf{Q}^\top$
with $\mathbf{G} \in \mathbb{R}^{m \times d}$.
The gradient descent update
$\Delta\mathbf{W}_{\mathrm{new}}
 \leftarrow
 \Delta\mathbf{W}_{\mathrm{new}}
 - \eta\,\mathbf{G}\,\mathbf{Q}\mathbf{Q}^\top$
modifies only the components of $\Delta\mathbf{W}_{\mathrm{new}}$
that map inputs from the historical subspace
$\operatorname{col}(\mathbf{Q})$.
Minimizing $\mathcal{L}_{\mathrm{dec}}$ drives
$\Delta\mathbf{W}_{\mathrm{new}} \mathbf{Q} \rightarrow \mathbf{0}$,
forcing near-zero outputs for historical directions.
$\mathcal{L}_{\mathrm{dec}}$ acts as a soft regularizer alongside
$\mathcal{L}_{\mathrm{NLL}}$,
penalizing redundant activations without enforcing hard orthogonality.
\end{proof}

\section{Continual-Learning Metrics}
\label{app:cl_metrics}

To complement the final average accuracy, we evaluate \hhh{} using standard continual-learning metrics on UCIT, following prior work~\citep{zhou2024class,clmetrics}. We report average forgetting, backward transfer (BWT), forward transfer (FWT), and average incremental accuracy (AIA). Lower forgetting and higher BWT, FWT, and AIA indicate better performance.

\begin{table}[t]
\centering
\footnotesize
\setlength{\tabcolsep}{4pt}
\caption{Standard continual-learning metrics on UCIT.}
\label{tab:cl_metrics}
\begin{tabular}{lcccc}
\toprule
\textbf{Method} & \textbf{Forgetting $\downarrow$} & \textbf{BWT $\uparrow$} & \textbf{FWT $\uparrow$} & \textbf{AIA $\uparrow$} \\
\midrule
MoE-LoRA & 18.55 & -18.55 & -10.46 & 68.17 \\
SAME & 4.68 & -4.68 & -7.72 & 80.29 \\
DISCO & 7.45 & -7.45 & -6.28 & 82.63 \\
\textbf{\hhh{} (Ours)} & \textbf{1.35} & \textbf{-1.35} & \textbf{2.47} & \textbf{82.85} \\
\bottomrule
\end{tabular}
\end{table}

As shown in Table~\ref{tab:cl_metrics}, \hhh{} achieves the lowest forgetting, the best BWT, and the highest AIA, indicating stronger retention and more stable performance throughout the continual-learning process. \hhh{} also obtains positive FWT, whereas all baselines show negative values, suggesting that previously acquired capabilities facilitate learning on future tasks.

\section{Multi-Seed Stability}
\label{app:multi_seed}

To assess run-to-run stability, we rerun the UCIT experiments with five random seeds under otherwise identical settings. We compare \hhh{} with SAME and DISCO, two of the strongest baselines. The results are reported in terms of average final accuracy.

\begin{table}[t]
\centering
\footnotesize
\setlength{\tabcolsep}{6pt}
\caption{Multi-seed results on UCIT. Mean and standard deviation are computed over five random seeds.}
\label{tab:multi_seed}
\begin{tabular}{lccc}
\toprule
\textbf{Seed} & \textbf{SAME} & \textbf{DISCO} & \textbf{\hhh{}} \\
\midrule
1 & 72.93 & 68.76 & \textbf{74.80} \\
10 & 72.38 & 69.13 & \textbf{75.03} \\
100 & 72.56 & 68.81 & \textbf{74.56} \\
1000 & 71.86 & 68.50 & \textbf{74.41} \\
42 (Original) & 73.07 & 69.54 & \textbf{74.73} \\
\midrule
\textbf{Average} & 72.56 & 68.95 & \textbf{74.71} \\
\textbf{Std.} & 0.48 & 0.40 & \textbf{0.24} \\
\bottomrule
\end{tabular}
\end{table}

As shown in Table~\ref{tab:multi_seed}, \hhh{} achieves the highest accuracy under every random seed, along with the highest average accuracy and the lowest standard deviation. These results indicate that its improvements are stable across runs rather than being attributable to a particular initialization.

\section{Long-Stream Parameter Efficiency}
\label{app:long_stream}

To evaluate parameter growth over a longer task stream, we combine the tasks from UCIT and TriGap into a single sequence of 16 tasks, including several repeated or closely related tasks. Figure~\ref{fig:long_stream_params} shows the cumulative number of trainable parameters as tasks arrive.

CRAM allocates parameters adaptively according to the capability gap of each incoming task, whereas MoE-LoRA allocates a fixed-size expert for every task. As a result, CRAM requires only 89.53M trainable parameters after 16 tasks, compared with 195.42M for MoE-LoRA, corresponding to a 54.2\% reduction. The slower parameter growth of CRAM demonstrates that previously acquired capabilities can be reused without redundant parameter allocation.

\begin{figure}[t]
  \centering
  \includegraphics[width=\columnwidth]{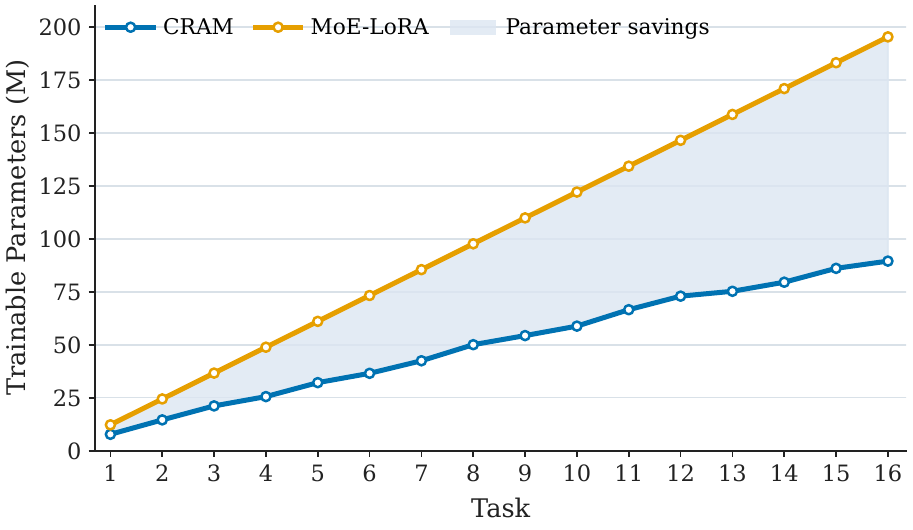}
  \caption{Cumulative trainable parameters over a 16-task stream. The shaded area indicates the parameter savings of \hhh{} compared with MoE-LoRA.}
  \label{fig:long_stream_params}
\end{figure}

\section{Additional Benchmark: MLLM-DCL}
\label{app:mllm_dcl}

To complement the UCIT and TriGap evaluations, we further assess \hhh{} on the more challenging and heterogeneous MLLM-DCL benchmark~\citep{zhao2025mllm}. The results are reported in Table~\ref{tab:mllm_dcl}.

\begin{table}[t]
\centering
\scriptsize
\setlength{\tabcolsep}{2pt}
\caption{Performance on MLLM-DCL. Domain abbreviations follow the benchmark notation.}
\label{tab:mllm_dcl}
\resizebox{\columnwidth}{!}{%
\begin{tabular}{lcccccc}
\toprule
\textbf{Method} & \textbf{RS} & \textbf{Med} & \textbf{AD} & \textbf{Sci} & \textbf{Fin} & \textbf{Avg.} \\
\midrule
Zero-shot & 32.29 & 28.28 & 15.59 & 35.55 & 62.56 & 34.85 \\
FT-LoRA & 69.65 & 41.59 & 25.43 & 40.88 & 87.45 & 53.00 \\
MoE-LoRA & \textbf{77.54} & 41.85 & 27.62 & 40.13 & 86.75 & 54.78 \\
ModalPrompt & 53.63 & 45.68 & 40.77 & 41.81 & 87.82 & 53.94 \\
CL-MoE & 71.34 & 46.84 & 26.33 & 41.17 & 88.74 & 54.88 \\
HiDe-LLaVA & 74.31 & 48.95 & 33.21 & 38.54 & 81.55 & 55.31 \\
DISCO & 76.49 & 44.48 & 44.84 & 46.61 & 89.22 & 60.33 \\
SAME & 72.97 & 48.29 & 43.67 & 48.12 & 87.68 & 60.15 \\
\textbf{\hhh{} (Ours)}
& 73.75 & \textbf{54.89} & \textbf{53.90}
& \textbf{49.91} & \textbf{92.19} & \textbf{64.93} \\
\bottomrule
\end{tabular}%
}
\end{table}

As shown in Table~\ref{tab:mllm_dcl}, \hhh{} achieves the highest average accuracy of 64.93, outperforming the strongest baseline, DISCO, by 4.60 points. It also ranks first in four of the five domains, demonstrating strong generalization under more diverse multimodal continual-learning settings.

\section{Selective Sharing under Coupled Format and Reasoning}
\label{app:coupled_format_reasoning}

\hhh{} does not assume that format learning and visual reasoning are separable within a task. Instead, format and reasoning are learned jointly, while the routing mechanism controls parameter sharing across tasks. Tasks with conflicting instruction conventions can therefore be isolated, whereas tasks with compatible formats and capabilities can reuse existing parameters.

To examine this setting, we construct a CLEVR-Math-based task stream in which format and reasoning are closely coupled:

\begin{description}
    \item[\textbf{Task A:}] ``How many brown matte objects are there? How many blue cylinders are there? What is the total? Answer with the sum number.''

    \item[\textbf{Task B:}] ``How many brown matte objects are there? Answer with a single number.''

    \item[\textbf{Task C:}] ``Are there more brown matte objects than blue cylinders? Answer with yes or no.''
\end{description}

The tasks are learned sequentially in the order A--B--C. Tasks A and B share counting-related capabilities and a numeric output format, whereas Task C differs in both reasoning behavior and output convention. We compare \hhh{}, which groups Tasks A and B while isolating Task C, with an All-in-one strategy that assigns all tasks to a single group.

\begin{table}[t]
\centering
\footnotesize
\setlength{\tabcolsep}{5pt}
\caption{Performance under coupled format and reasoning.}
\label{tab:coupled_format_reasoning}
\begin{tabular}{lcccc}
\toprule
\textbf{Strategy} & \textbf{Task A} & \textbf{Task B} & \textbf{Task C} & \textbf{Avg.} \\
\midrule
All-in-one & 9.0 & 20.0 & 86.6 & 38.5 \\
\textbf{\hhh{} (Ours)} & \textbf{76.2} & \textbf{75.0} & \textbf{90.2} & \textbf{80.5} \\
\bottomrule
\end{tabular}
\end{table}

As shown in Table~\ref{tab:coupled_format_reasoning}, \hhh{} substantially outperforms the All-in-one strategy, demonstrating that selective cross-task isolation remains effective even when instruction format and reasoning behavior are strongly intertwined.

\section{Stability--Plasticity Trade-off}
\label{app:stability_plasticity}

To analyze the effect of the orthogonality constraint, we compare the full \hhh{} model with a variant without this constraint on UCIT. Average Forgetting measures retention of previously learned tasks, Immediate Accuracy measures performance on the current task immediately after training, and FWT measures forward transfer to future tasks relative to the zero-shot baseline~\citep{zhou2024class}.

\begin{table}[t]
\centering
\footnotesize
\setlength{\tabcolsep}{6pt}
\caption{Effect of the orthogonality constraint on stability and plasticity.}
\label{tab:stability_plasticity}
\begin{tabular}{lcc}
\toprule
\textbf{Metric} & \textbf{w/o Orthogonality} & \textbf{\hhh{}} \\
\midrule
Avg. Forgetting $\downarrow$ & 6.73 & \textbf{1.35} \\
Immediate Accuracy $\uparrow$ & \textbf{76.42} & 75.67 \\
FWT $\uparrow$ & \textbf{2.63} & 2.47 \\
\bottomrule
\end{tabular}
\end{table}

As shown in Table~\ref{tab:stability_plasticity}, the orthogonality constraint reduces average forgetting from 6.73 to 1.35, while decreasing Immediate Accuracy and FWT by only 0.75 and 0.16 points, respectively. This result indicates that the constraint substantially improves stability with only a minor loss in plasticity and forward transfer.

\section{Robustness to Task Order}
\label{app:task_order}

While Fig.~\ref{fig:exp2:b} demonstrates \hhh{}'s stability under different sequential format adaptation, we further evaluate its robustness to task ordering on both UCIT and TriGap benchmarks. Specifically, for each benchmark we report results under three distinct task sequences:
\begin{itemize}
    \item \textbf{Original}: the canonical task order defined by the benchmark protocol;
    \item \textbf{Reverse}: the exact reverse of the original sequence, with the last task becoming the first;
    \item \textbf{Random}: a fixed randomly shuffled order:
    \begin{description}
        \item[\textbf{UCIT:}]
        Flickr30k $\rightarrow$ CLEVR-Math $\rightarrow$ ArxivQA $\rightarrow$
        ImageNet-R $\rightarrow$ IconQA $\rightarrow$ VizWiz-Caption

        \item[\textbf{TriGap:}]
        ChemVQA $\rightarrow$ CLEVR-Math $\rightarrow$ PMCVQA $\rightarrow$
        InfographicVQA $\rightarrow$ Roadside $\rightarrow$ DocVQA $\rightarrow$
        ArxivQA $\rightarrow$ FloodNetVQA $\rightarrow$ ChartQA $\rightarrow$ IconQA
    \end{description}
\end{itemize}

\vspace{1mm}

\begin{table}[t]
\centering
\footnotesize
\caption{Task-order robustness of \hhh{} on UCIT. Average accuracy (\%) is reported under the original, reverse, and fixed random task orders.}
\label{tab:task_order_ucit}
\resizebox{\linewidth}{!}{%
\begin{tabular}{@{}lccc@{}}
\toprule
\textbf{Metric} & \textbf{Original} & \textbf{Reverse} & \textbf{Random} \\
\midrule
Avg. Accuracy (\%) & 74.73 & 73.06 & 72.89 \\
\bottomrule
\end{tabular}%
}
\end{table}

\begin{table}[t]
\centering
\footnotesize
\caption{Task-order robustness of \hhh{} on TriGap. The task-order settings follow Table~\ref{tab:task_order_ucit}.}
\label{tab:task_order_trigap}
\resizebox{\linewidth}{!}{%
\begin{tabular}{@{}lccc@{}}
\toprule
\textbf{Metric} & \textbf{Original} & \textbf{Reverse} & \textbf{Random} \\
\midrule
Avg. Accuracy (\%) & 47.79 & 46.97 & 48.01 \\
\bottomrule
\end{tabular}%
}
\end{table}
As shown in Tabs.~\ref{tab:task_order_ucit} and~\ref{tab:task_order_trigap}, \hhh{} exhibits strong robustness to task ordering: performance fluctuations remain marginal across all sequences. This stability stems from two key designs: (1) \textit{group-isolated routing} decouples format-level interference at the semantic level, making expert activation insensitive to the chronological arrival of tasks; and (2) \textit{centroid-guided orthogonal learning} confines new updates to subspaces orthogonal to historical knowledge, preventing order-dependent accumulation of redundant directions. Notably, the minimal degradation under perturbed orders suggests that \hhh{} does not rely on a favorable task curriculum, making it suitable for realistic open-world deployment.

\section{Dataset Details}
\label{app:dataset_details}
Tables~\ref{tab:ucit_details} and~\ref{tab:trigap_details} summarize the diverse datasets and domains used to evaluate continual learning across varied visual inputs and instruction formats.

\begin{table}[t]
\centering
\footnotesize
\caption{UCIT benchmark.}
\label{tab:ucit_details}
\setlength{\tabcolsep}{2pt}
\renewcommand{\arraystretch}{1.1}
\begin{tabular}{@{}p{2.2cm}p{3.2cm}cc@{}}
\toprule
\textbf{Dataset} & \textbf{Domain} & \textbf{Train} & \textbf{Test} \\
\midrule
ImageNet-R   & Artistic           & 24,000 & 3,000 \\
ArxivQA      & Academic figures   & 40,000 & 3,000 \\
VizWiz       & Visual assistance  & 40,000 & 3,000 \\
IconQA       & Icons              & 30,000 & 3,000 \\
CLEVR-Math   & Synthetic reasoning & 40,000 & 3,000 \\
Flickr30k    & Captioning         & 40,000 & 3,000 \\
\bottomrule
\end{tabular}
\end{table}

\begin{table}[t]
\centering
\footnotesize
\caption{TriGap benchmark.}
\label{tab:trigap_details}
\setlength{\tabcolsep}{2pt}
\renewcommand{\arraystretch}{1.1}

\begin{tabular}{@{}p{2.2cm}p{3.2cm}cc@{}}
\toprule
\textbf{Dataset} & \textbf{Domain} & \textbf{Train} & \textbf{Test} \\
\midrule
PMCVQA         & Medical             & 40,000  & 3,000 \\
DocVQA         & Documents           & 30,000  & 5,349 \\
ChartQA        & Charts              & 25,000  & 2,500 \\
IconQA         & Icons               & 10,000  & 3,000 \\
InfographicVQA & Infographics        & 20,000  & 2,801 \\
ArxivQA        & Academic figures    & 10,000  & 3,000 \\
Roadside       & Autonomous driving  & 30,023  & 3,000 \\
ChemVQA        & Molecules           & 15,392  & 3,000 \\
FloodNetVQA    & Disaster scenes     & 5,898   & 1,833 \\
CLEVR          & Synthetic reasoning & 10,000  & 3,000 \\
\bottomrule
\end{tabular}
\end{table}

\section{Comprehensive Study on MCIT Methods}
\label{app:methods}

In this section, we provide details of the methods compared in the main paper. The specifics of each compared method are outlined as follows:

\begin{itemize}
    \item \textbf{Replay-LoRA}: A rehearsal-based baseline that integrates a fixed-size episodic memory buffer with LoRA fine-tuning. During each incremental task, it stores a representative subset of historical data and jointly optimizes the low-rank adapters using both current and replayed samples. This explicit data rehearsal strategy effectively constrains parameter drift and mitigates catastrophic forgetting while preserving parameter efficiency.

    \item \textbf{MoE-LoRA}: This method extends LoRA-based fine-tuning to a Mixture-of-Experts architecture for continual instruction tuning, where each task activates a subset of LoRA experts via a learnable router. However, it remains vulnerable to router drift and expert drift as tasks accumulate.

    \item \textbf{HiDe-LLaVA}: This method hierarchically decouples model adaptation: the top layer undergoes task-specific LoRA expansion with dual-modality anchor matching for expert selection, while lower layers fuse LoRAs across tasks to preserve general knowledge without router training.

    \item \textbf{ModalPrompt}: A prompt-based framework that constructs task-specific prompts and leverages dual-modality guidance for two purposes: prompt fusion during training to transfer knowledge from semantically similar tasks, and prompt selection during inference to control computational complexity. The method maintains inference efficiency by selecting only $k$ relevant prompts from a shared pool regardless of task count.

    \item \textbf{CL-MoE}: A dual momentum Mixture-of-Experts framework designed for continual MLLM adaptation. It introduces a Dual-Router MoE (RMoE) that jointly employs task-level and instance-level routers to robustly allocate global and local experts. Coupled with a Dynamic Momentum MoE (MMoE), it categorizes experts into task-shared and task-specific types and dynamically updates their parameters via a momentum-based interpolation between historical and current knowledge. This mechanism effectively alleviates catastrophic forgetting and enhances both forward and backward transfer without relying on replay buffers.

    \item \textbf{SAME}: SAME stabilizes MoE adaptation via spectral-aware routing (decomposing router updates into task-relevant and history-preserving subspaces) and curvature-aware scaling (regulating expert updates using historical input covariance). It also introduces adaptive expert activation to freeze selected experts during training, reducing redundant computation and cross-task interference. SAME achieves strong performance on MCIT benchmarks and serves as a direct point of comparison for our \hhh{} framework.

    \item \textbf{DISCO}: A federated continual learning framework that employs dynamic knowledge organization (DKO) to store task-specific parameters in a global cache using identity token matching, and subspace selective activation (SSA) to filter irrelevant outputs during inference. While originally designed for federated settings, its continual learning components are applicable to centralized MCIT and are included for comparison.
\end{itemize}

\end{document}